\newcommand{\p}{{\bf p}}
\newcommand{\s}{{\bf s}}
\newcommand{\z}{{\bf z}}
\newcommand{\x}{{\bf x}}
\newcommand{\tx}{{\tilde{\x}}}
\newcommand{\ty}{{\tilde{y}}}
\newcommand{\Z}{{\mathcal{Z}}}
\newcommand{\F}{{\mathcal{F}}}
\newcommand{\D}{{\mathcal{D}}}
\newcommand{\0}{{\bf 0}}
\newcommand{\R}{\mathbb{R}}
\newcommand{\C}{\mathbb{C}}
\newcommand{\E}{\mathbb{E}}
\newcommand{\1}{{\bm 1}}
\newcommand{\Reg}{{\bf R}}
\newcommand{\sign}{{\text {sign}}}
\newcommand{\LHS}{{\text {LHS}}}
\newcommand{\RHS}{{\text {RHS}}}
\newcommand{\ANH}{{NormalHedge.DT}}
\newcommand{\ANB}{{NH-Boost.DT}}
\renewcommand{\P}{{\text {Pr}}}
\renewcommand{\H}{{\mathcal{H}}}
\renewcommand{\l}{{\bm \ell}}
\renewcommand{\(}{\left(}
\renewcommand{\)}{\right)}
\newtheorem{theorem}{Theorem}
\newtheorem{corollary}{Corollary}
\newtheorem{lemma}{Lemma}
\title{A Drifting-Games Analysis for Online Learning and Applications to Boosting}
\author{
Haipeng Luo \\
Department of Computer Science\\
Princeton University\\
Princeton, NJ 08540 \\
\texttt{haipengl@cs.princeton.edu} \\
\And
Robert E. Schapire\thanks{R. Schapire is currently at Microsoft Research in New York City.} \\
Department of Computer Science\\
Princeton University\\
Princeton, NJ 08540 \\
\texttt{schapire@cs.princeton.edu} \\
}
\begin{document}

\maketitle

\begin{abstract}
We provide a general mechanism to design online learning algorithms
based on a minimax analysis within a drifting-games framework.
Different online learning settings 
(Hedge, multi-armed bandit problems and online convex optimization)
are studied by converting into various kinds of drifting games.
The original minimax analysis for drifting games is then used
and generalized by applying a series of relaxations, 
starting from choosing a convex surrogate of the 0-1 loss function.
With different choices of surrogates, 
we not only recover existing algorithms, 
but also propose new algorithms that are totally parameter-free
and enjoy other useful properties.
Moreover, our drifting-games framework naturally allows us to study
high probability bounds without resorting to any concentration results,
and also a generalized notion of regret that measures how good the 
algorithm is compared to all but the top small fraction of candidates.
Finally, we translate our new Hedge algorithm into a new adaptive boosting 
algorithm that is computationally faster as shown in experiments, 
since it ignores a large number of examples on each round. 
\end{abstract}

\section{Introduction}
In this paper, we study online learning problems within a
drifting-games framework, 
with the aim of developing a general methodology for designing learning algorithms
based on a minimax analysis.

To solve an online learning problem,
it is natural to consider game-theoretically optimal algorithms 
which find the best solution even in worst-case scenarios.
This is possible for some special cases 
(\cite{CesabianchiFrHeHaScWa97, AbernethyBaRaTe08, AbernethyWa10, LuoSc14})
but difficult in general.
On the other hand, many other efficient algorithms with optimal regret rate 
(but not exactly minimax optimal) have been proposed for different learning settings 
(such as the exponential weights algorithm
\cite{FreundSc97, FreundSc99}, and follow the perturbed leader \cite{KalaiVe05}).
However, it is not always clear how to come up with these algorithms.
Recent work by Rakhlin et al. \cite{RakhlinShSr12}
built a bridge between these two classes of methods by showing that
many existing algorithms can indeed be derived from 
a minimax analysis followed by a series of relaxations.

In this paper, we provide a parallel way to design learning algorithms
by first converting online learning problems into variants of drifting games,
and then applying a minimax analysis and relaxations.
{\it Drifting games} \cite{Schapire01} (reviewed in Section \ref{sec:drifting})
generalize Freund's ``majority-vote game''
\cite{Freund95} and subsume some well-studied boosting and 
online learning settings. 
A nearly minimax optimal algorithm is proposed in \cite{Schapire01}.
It turns out the connections between drifting games and online learning
go far beyond what has been discussed previously.
To show that, we consider variants of drifting games that capture
different popular online learning problems.
We then generalize the minimax analysis in \cite{Schapire01}
based on one key idea: {\it relax a 0-1 loss function by a convex surrogate}.
Although this idea has been applied widely elsewhere in machine learning, 
we use it here in a new way to obtain a very general methodology 
for designing and analyzing online learning algorithms.
Using this general idea, we not only recover existing algorithms,
but also design new ones with special useful properties.
A somewhat surprising result is that our new algorithms are totally
{\it parameter-free}, which is usually not the case for
algorithms derived from a minimax analysis.
Moreover, a generalized notion of regret 
($\epsilon$-regret, defined in Section \ref{sec:hedge})
that measures how good the algorithm
is compared to all but the top $\epsilon$ fraction of candidates
arises naturally in our drifting-games framework.
Below we summarize our results for a range of learning settings.

\textbf{Hedge Settings:} (Section \ref{sec:hedge})
The Hedge problem \cite{FreundSc97} investigates how to cleverly bet across a set of actions.
We show an algorithmic equivalence between this problem and a simple 
drifting game (DGv1).
We then show how to relax the original minimax analysis step by step
to reach a general recipe for designing Hedge algorithms 
(Algorithm \ref{alg:hedge_recipe}).
Three examples of appropriate convex surrogates of the
0-1 loss function are then discussed, 
leading to the well-known exponential weights algorithm and two other new ones, 
one of which ({\ANH} in Section \ref{sec:3algs}) bears some similarities with the 
NormalHedge algorithm \cite{ChaudhuriFrHs09} 
and enjoys a similar $\epsilon$-regret bound {\it simultaneously} for all
$\epsilon$ and horizons. 
However, our regret bounds do not depend on the number of actions,
and thus can be applied even when there are infinitely many actions.
Our analysis is also arguably simpler and more intuitive than the one in 
\cite{ChaudhuriFrHs09} and easy to be generalized to more general settings.
Moreover, our algorithm is more computationally efficient since
it does not require a numerical searching step as in NormalHedge.
Finally, we also derive high probability bounds for the randomized Hedge
setting as a simple side product of our framework {\it without} using any
concentration results. 

\textbf{Multi-armed Bandit Problems:} (Section \ref{sec:gen})
The multi-armed bandit problem \cite{AuerCeFrSc02} is a classic example for learning with
incomplete information where the learner can only obtain feedback
for the actions taken.
To capture this problem, we study a quite different
drifting game (DGv2) where randomness and variance constraints
are taken into account.
Again the minimax analysis is generalized and the EXP3 algorithm
\cite{AuerCeFrSc02} is recovered. 
Our results could be seen as a preliminary step to answer the
open question \cite{AbernethyWa09} on exact minimax optimal algorithms 
for the multi-armed bandit problem. 

\textbf{Online Convex Optimization:} (Section \ref{sec:gen})
Based the theory of convex optimization,
online convex optimization \cite{Zinkevich03} has been the foundation of modern
online learning theory.
The corresponding drifting game formulation is a continuous
space variant (DGv3).
Fortunately, it turns out that all results from the Hedge setting
are ready to be used here, 
recovering the continuous EXP algorithm \cite{Cover91, HazanAgKa07, NarayananRa10}
and also generalizing our new algorithms to this general setting. 
Besides the usual regret bounds, we also generalize the $\epsilon$-regret,
which, as far as we know, is the first time it has been explicitly studied.
Again, we emphasize that our new algorithms are adaptive 
in $\epsilon$ and the horizon.

\textbf{Boosting:} (Section \ref{sec:gen})
Realizing that every Hedge algorithm can be converted into 
a boosting algorithm (\cite{SchapireFr12}), 
we propose a new boosting algorithm ({\ANB}) by converting {\ANH}.
The adaptivity of {\ANH} is then translated into training error and
margin distribution bounds that previous analysis in \cite{SchapireFr12} 
using nonadaptive algorithms does not show.
Moreover, our new boosting algorithm 
ignores a great many examples on each round, which is
an appealing property useful to speeding up the weak learning algorithm.
This is confirmed by our experiments.


\textbf{Related work}:
Our analysis makes use of potential functions. 
Similar concepts have widely appeared in the literature
\cite{CesabianchiLu03, AudibertBuLu14},
but unlike our work, they are not related to any minimax analysis
and might be hard to interpret. 
The existence of parameter free Hedge algorithms for  
unknown number of actions was shown in \cite{ChernovVovk10},
but no concrete algorithms were given there.
Boosting algorithms that ignore some examples on each round
were studied in \cite{FriedmanHaTi00}, where a heuristic was
used to ignore examples with small weights and no theoretical
guarantee is provided.

\section{Reviewing Drifting Games}
\label{sec:drifting}
We consider a simplified version of drifting games similar to the one described in 
\citep[chap. 13]{SchapireFr12} (also called chip games).
This game proceeds through $T$ rounds, 
and is played between a player and an adversary who controls $N$ chips on the real line.
The positions of these chips at the end of round $t$ are denoted by $\s_t \in \R^N$,
with each coordinate $s_{t,i}$ corresponding to the position of chip $i$.
Initially, all chips are at position $0$ so that $\s_0 = \0$.
On every round $t = 1, \ldots, T$: 
the player first chooses a distribution $\p_t$ over the chips,
then the adversary decides the movements of the chips
$\z_t$ so that the new positions are updated as $\s_t = \s_{t-1} + \z_t$.
Here, each $z_{t,i}$ has to be picked from a prespecified set $B \subset \R$,
and more importantly, satisfy the constraint $\p_t \cdot \z_t \geq \beta \geq 0$
for some fixed constant $\beta$.

At the end of the game, each chip is associated with a nonnegative loss
defined by $L(s_{T,i})$ for some nonincreasing function
$L$ mapping from the final position of the chip to $\R_+$. 
The goal of the player is to minimize the chips' average loss 
$ \frac{1}{N} \sum_{i=1}^N L(s_{T,i})$ after $T$ rounds.
So intuitively, the player aims to ``push'' the chips to the right by
assigning appropriate weights on them so that the adversary has to move them 
to the right by $\beta$ in a weighted average sense on each round.
This game captures many learning problems. 
For instance, binary classification via boosting can be translated into
a drifting game by treating each training example as a chip 
(see \cite{Schapire01} for details).

We regard a player's strategy $\D$ as a function
mapping from the history of the adversary's decisions to a distribution 
that the player is going to play with, 
that is, $\p_t = \D(\z_{1:t-1})$ where $\z_{1:t-1}$ stands for $\z_1, \ldots, \z_{t-1}$. 
The player's worst case loss using this algorithm is then denoted by $L_T(\D)$.
The minimax optimal loss of the game is computed by the following expression:
$ \min_{\D}L_T(\D) = \min_{\p_1\in \Delta_N}\max_{\z_1 \in \Z_{\p_1}} \cdots  
\min_{\p_T\in \Delta_N}\max_{\z_T \in \Z_{\p_T}}  
\frac{1}{N}\sum_{i=1}^N L(\sum_{t=1}^T z_{t,i}) ,$
where $\Delta_N$ is the $N$ dimensional simplex and 
$\Z_\p = B^N \cap \{\z: \p\cdot\z \geq \beta\}$ is assumed to be compact. 
A strategy $\D^*$ that realizes the minimum in $\min_{\D}L_T(\D)$
is called a minimax optimal strategy. 
A nearly optimal strategy and its analysis is originally given in \cite{Schapire01},
and a derivation by directly tackling the above minimax expression can
be found in \citep[chap. 13]{SchapireFr12}.
Specifically, a sequence of potential functions of a chip's position 
is defined recursively as follows:
\begin{equation}\label{equ:minimax_potentials}
\Phi_T(s) = L(s), \quad \Phi_{t-1}(s) = \min_{w \in \R_+}\max_{z\in B} (\Phi_t(s+z) + w(z - \beta)).
\end{equation}
Let $w_{t,i}$ be the weight that realizes the minimum in the definition of 
$\Phi_{t-1}(s_{t-1,i})$, that is, 
$w_{t,i} \in \arg\min_w\max_z(\Phi_t(s_{t-1,i}+z)+w(z-\beta))$.
Then the player's strategy is to set $p_{t,i} \propto w_{t,i}$.
The key property of this strategy is that it assures that the sum
of the potentials over all the chips never increases, 
connecting the player's final loss with the potential at time $0$ as follows:
\begin{equation}\label{equ:monotonic_potentials}
\frac{1}{N}\sum_{i=1}^N L(s_{T,i}) \leq \frac{1}{N}\sum_{i=1}^N \Phi_T(s_{T,i})
\leq \frac{1}{N}\sum_{i=1}^N \Phi_{T-1}(s_{T-1,i}) \leq  \cdots 
\leq \frac{1}{N}\sum_{i=1}^N \Phi_{0}(s_{0,i})  = \Phi_{0}(0) .
\end{equation}
It has been shown in \cite{Schapire01} that this upper bound on the loss
is optimal in a very strong sense.

Moreover, in some cases the potential functions have nice closed forms and thus the
algorithm can be efficiently implemented. 
For example, in the boosting setting, $B$ is simply $\{-1,+1\}$, 
and one can verify 
$ \Phi_t(s) = \frac{1+\beta}{2}\Phi_{t+1}(s+1) + \frac{1-\beta}{2}\Phi_{t+1}(s-1)$ and 
$ w_{t,i} = \tfrac{1}{2}\(\Phi_t(s_{t-1,i} - 1) - \Phi_t(s_{t-1,i} + 1)\)$.
With the loss function $L(s)$ being $\1\{s\leq 0\}$, these can be further simplified
and eventually give exactly the boost-by-majority algorithm \cite{Freund95}.

\section{Online Learning as a Drifting Game}
\label{sec:hedge}
The connection between drifting games and some specific settings of online learning
has been noticed before (\cite{Schapire01, MukherjeeSc10}).
We aim to find deeper connections or even an equivalence between variants 
of drifting games and more general settings of online learning, 
and provide insights on designing learning algorithms through
a minimax analysis.
We start with a simple yet classic Hedge setting. 

\subsection{Algorithmic Equivalence}
In the Hedge setting \cite{FreundSc97}, a player tries to earn as much as possible
(or lose as little as possible) by cleverly spreading a fixed amount 
of money to bet on a set of actions on each day.   
Formally, the game proceeds for $T$ rounds, and
on each round $t = 1, \ldots, T$: 
the player chooses a distribution $\p_t$ over $N$ actions,
then the adversary decides the actions' losses  $\l_t$
(i.e. action $i$ incurs loss $\ell_{t,i}\in [0,1]$) which are revealed to the player.
The player suffers a weighted average loss $\p_t\cdot\l_t$ at the end of this round.
The goal of the player is to minimize his ``regret'', which is
usually defined as the difference between his total loss and the loss of the best action.
Here, we consider an even more general notion of regret studied 
in \cite{Kleinberg05, Kleinberg06, ChaudhuriFrHs09, ChernovVovk10}, which we call {\it $\epsilon$-regret}.
Suppose the actions are ordered according to their total losses after $T$ rounds
(i.e. $\sum_{t=1}^T \ell_{t,i}$) from smallest to largest,
and let $i_\epsilon$ be the index of the action that is the 
$\lceil N\epsilon \rceil$-th element in the sorted list ($0 < \epsilon \leq 1$).
Now, $\epsilon$-regret is defined as
$ \Reg_{T}^\epsilon(\p_{1:T}, \l_{1:T}) 
= \sum_{t=1}^T \p_t\cdot \l_t - \sum_{t=1}^T \ell_{t, i_\epsilon}. $
In other words, $\epsilon$-regret measures the difference between
the player's loss and the loss of the $\lceil N\epsilon \rceil$-th best action
(recovering the usual regret with $\epsilon \leq 1/N$),
and sublinear $\epsilon$-regret implies that the player's loss is almost 
as good as all but the top $\epsilon$ fraction of actions.
Similarly, 
$\Reg_{T}^\epsilon(\H)$
denotes the worst case $\epsilon$-regret for a specific algorithm $\H$.
For convenience, when $\epsilon \leq 0$ or $\epsilon > 1$,
we define $\epsilon$-regret to be $\infty$ or $-\infty$ respectively.

Next we discuss how Hedge is highly related to drifting games.
Consider a variant of drifting games where $B = [-1, 1], \beta=0$
and $L(s) = \1\{s \leq -R\}$ for some constant $R$. 
Additionally, we impose an extra restriction on the adversary: 
$|z_{t, i} - z_{t, j}| \leq 1$ for all $i$ and $j$.
In other words, the difference between any two chips' movements is at most $1$. 
We denote this specific variant of drifting games by DGv1 (summarized in
 Appendix \ref{apd:DGv}) 
and a corresponding algorithm by $\D_R$ to emphasize the dependence on $R$.
The reductions in Algorithm \ref{alg:hedge2drift} and \ref{alg:drift2hedge} and 
Theorem \ref{thm:equivalence} show that DGv1 and the Hedge problem are algorithmically equivalent
(note that both conversions are valid). 
The proof is straightforward and deferred to Appendix \ref{apd:equiv}.
By Theorem \ref{thm:equivalence}, it is clear that the minimax optimal 
algorithm for one setting is also minimax optimal for the other 
under these conversions.

\begin{figure}[t]
\centering
\SetAlCapSkip{.2em}
\begin{minipage}{0.49\textwidth}
\IncMargin{.5em}
\begin{algorithm}[H]
\caption{Conversion of a Hedge Algorithm $\H$ to a DGv1 Algorithm $\D_R$}
\label{alg:hedge2drift}

\SetKwInOut{Input}{Input}
\Input{A Hedge Algorithm $\mathcal{H}$}

\For{ $t = 1$ \KwTo $T$} {
Query $\H$: $\p_t = \H(\l_{1:t-1})$. \\
Set:  $\D_R(\z_{1:t-1}) = \p_t$. \\
Receive movements $\z_t$ from the adversary. \\
Set:  $\ell_{t,i} = z_{t,i} - \min_j z_{t,j}, \;\forall i$. \\
}
\end{algorithm}
\DecMargin{.5em}
\end{minipage}
\begin{minipage}{0.49\textwidth}
\IncMargin{.5em}
\begin{algorithm}[H]
\caption{Conversion of a DGv1 Algorithm $\D_R$ to a Hedge Algorithm $\H$}
\label{alg:drift2hedge}

\SetKwInOut{Input}{Input}
\Input{A DGv1 Algorithm $\D_R$}

\For{ $t = 1$ \KwTo $T$} {
Query $\D_R$: $\p_t = \D_R(\z_{1:t-1})$. \\
Set:  $\H(\l_{1:t-1}) = \p_t$. \\
Receive losses $\l_t$ from the adversary. \\
Set:  $z_{t,i} = \ell_{t,i} - \p_t \cdot \l_t, \;\forall i$. \\
}
\end{algorithm}
\DecMargin{.5em}
\end{minipage}
\end{figure}

\begin{theorem}\label{thm:equivalence}
DGv1 and the Hedge problem are algorithmically equivalent in the following sense: \\
(1) Algorithm \ref{alg:hedge2drift} produces a DGv1 algorithm $\D_R$
satisfying $L_T(\D_R) \leq i/N$ where $i\in\{0,\ldots,N\}$ is such that 
$\Reg_T^{(i+1)/N}(\H) < R \leq \Reg_T^{i/N}(\H)$. 

(2) Algorithm \ref{alg:drift2hedge} produces a Hedge algorithm $\H$
with $\Reg_T^{\epsilon}(\H) < R$ for any $R$ such that $L_T(\D_R) < \epsilon$. 
\end{theorem}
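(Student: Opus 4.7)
The plan is to verify Theorem~\ref{thm:equivalence} in two stages: first confirm that both conversions produce legal inputs for the target setting, then prove the two quantile inequalities using a single algebraic bridge between the final chip positions $s_{T,k}$ and the cumulative Hedge losses $\sum_t \ell_{t,k}$.

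First I would check validity. In Algorithm~\ref{alg:hedge2drift}, the constructed losses $\ell_{t,i}=z_{t,i}-\min_j z_{t,j}$ lie in $[0,1]$: nonnegativity is by definition, and $\ell_{t,i}\le 1$ follows from the DGv1 constraint $|z_{t,i}-z_{t,j}|\le 1$. In Algorithm~\ref{alg:drift2hedge}, the induced moves $z_{t,i}=\ell_{t,i}-\p_t\cdot\l_t$ satisfy $z_{t,i}\in[-1,1]$, $|z_{t,i}-z_{t,j}|=|\ell_{t,i}-\ell_{t,j}|\le 1$, and $\p_t\cdot\z_t=\p_t\cdot\l_t-\p_t\cdot\l_t=0\ge\beta$, so $\z_t$ is a legal DGv1 move.

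Next I would establish the algebraic bridge. In Algorithm~\ref{alg:hedge2drift}, setting $C=\sum_t\min_j z_{t,j}$ (an $i$-independent constant) gives $s_{T,k}=\sum_t\ell_{t,k}+C$ and $\sum_t\p_t\cdot\l_t=\sum_t\p_t\cdot\z_t-C$, so the regret of $\H$ against action $k$ equals $\sum_t\p_t\cdot\z_t-s_{T,k}\ge -s_{T,k}$, using the DGv1 drift constraint $\p_t\cdot\z_t\ge\beta=0$. In Algorithm~\ref{alg:drift2hedge}, summing $z_{t,i}=\ell_{t,i}-\p_t\cdot\l_t$ over $t$ yields the cleaner identity
\[
s_{T,k}=\sum_{t=1}^T\ell_{t,k}-\sum_{t=1}^T\p_t\cdot\l_t,
\]
so the regret against action $k$ equals exactly $-s_{T,k}$. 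In both conversions the cumulative loss $\sum_t \ell_{t,k}$ and $s_{T,k}$ differ only by an $i$-independent shift, so ordering actions by total loss coincides with ordering chips by final position.

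With this bridge in hand, both parts follow by translating the quantile conditions. For (1), assume for contradiction that under some adversary play at least $i+1$ chips satisfy $s_{T,k}\le -R$. Then at least $i+1$ actions incur regret $\ge R$ against $\H$, and because these are exactly the $i+1$ smallest-cumulative-loss actions (order-preservation), the regret against the $(i+1)$-th best action is $\ge R$, contradicting $\Reg_T^{(i+1)/N}(\H)<R$; hence $L_T(\D_R)\le i/N$. For (2), the hypothesis $L_T(\D_R)<\epsilon$ forces the integer count of fallen chips to be at most $\lceil N\epsilon\rceil-1$ under every Hedge adversary play (true in both the integer and non-integer cases of $N\epsilon$), so the $\lceil N\epsilon\rceil$-th smallest $s_{T,k}$ is strictly greater than $-R$; combined with $\Reg_T^\epsilon(\H,\l_{1:T})=-s_{T,i_\epsilon}$ this gives $\Reg_T^\epsilon(\H)<R$.

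The main delicacy I expect is purely bookkeeping: matching a count-out-of-$N$ against $\lceil N\epsilon\rceil$, tracking when strict versus non-strict inequalities propagate, and verifying that the pointwise bound lifts to the worst-case quantity (which is immediate because each conversion maps a legal adversary play in one setting to a legal play in the other). All of the real content is in the single algebraic identity plus the observation that both conversions preserve the ordering of actions/chips.
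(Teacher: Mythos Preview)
Your proposal is correct and follows essentially the same approach as the paper: verify that both conversions are valid, then use the identity $z_{t,i}-\p_t\cdot\z_t=\ell_{t,i}-\p_t\cdot\l_t$ (your ``algebraic bridge'') together with $\p_t\cdot\z_t\ge0$ to relate chip positions to per-action regret, and finish with a counting/quantile argument. The only cosmetic difference is that you phrase part~(1) by contradiction and make the order-preservation step explicit, whereas the paper argues directly; the content is the same.
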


\subsection{Relaxations}
From now on we only focus on the direction of converting a drifting game algorithm 
into a Hedge algorithm.
In order to derive a minimax Hedge algorithm,
Theorem \ref{thm:equivalence} tells us it suffices to derive minimax
DGv1 algorithms.
Exact minimax analysis is usually difficult,
and appropriate relaxations seem to be necessary.
To make use of the existing analysis for standard drifting games,
the first obvious relaxation is to drop the additional restriction 
in DGv1, that is, $|z_{t,i}-z_{t,j}| \leq 1$ for all $i$ and $j$.
Doing this will lead to the exact setting discussed in \cite{MukherjeeSc10}
where a near optimal strategy is proposed using the recipe in 
Eq. \eqref{equ:minimax_potentials}.
It turns out that this relaxation is reasonable and does not 
give too much more power to the adversary.
To see this, first recall that results from \cite{MukherjeeSc10}, 
written in our notation, state that 
$ \min_{\D_R} L_T(\D_R) \leq \frac{1}{2^T}\sum_{j=0}^{\frac{T-R}{2}} 
\binom{T+1}{j}, $
which, by Hoeffding's inequality, is upper bounded by
$ 2\exp\(-\frac{(R+1)^2}{2(T+1)}\) $.
Second, statement (2) in Theorem \ref{thm:equivalence}
clearly remains valid if the input of Algorithm \ref{alg:drift2hedge}
is a drifting game algorithm for this relaxed version of DGv1.
Therefore, by setting $\epsilon > 2\exp\(-\frac{(R+1)^2}{2(T+1)}\)$
and solving for $R$,
we have $\Reg_T^{\epsilon}(\H) \leq O\(\sqrt{T\ln (\frac{1}{\epsilon}})\)$,
which is the known optimal regret rate for the Hedge problem,
showing that we lose little due to this relaxation.

However, the algorithm proposed in \cite{MukherjeeSc10} is not
computationally efficient since the potential functions $\Phi_t(s)$
do not have closed forms.
To get around this, we would want the minimax expression
in Eq. \eqref{equ:minimax_potentials} to be easily solved,
just like the case when $B=\{-1,1\}$.
It turns out that convexity would allow us to treat $B=[-1,1]$ almost
as $B=\{-1,1\}$.
Specifically, if each $\Phi_t(s)$ is a convex function of $s$,
then due to the fact that the maximum of a convex function
is always realized at the boundary of a compact region, we have
\begin{equation}\label{equ:convexity}
\min_{w \in \R_+}\max_{z\in [-1,1]} \(\Phi_t(s+z) + wz\) 
= \min_{w \in \R_+}\max_{z\in \{-1,1\}} \(\Phi_t(s+z) + wz\) 
= \frac{\Phi_t(s-1) + \Phi_t(s+1)}{2} ,
\end{equation}
with $w = (\Phi_t(s-1)-\Phi_t(s+1))/2$ realizing the minimum.
Since the 0-1 loss function $L(s)$ is not convex,
this motivates us to find a convex surrogate of $L(s)$.
Fortunately, relaxing the equality constraints in 
Eq. \eqref{equ:minimax_potentials} does not affect
the key property of Eq. \eqref{equ:monotonic_potentials}
as we will show in the proof of Theorem \ref{thm:hedge_recipe}.
``Compiling out'' the input of Algorithm \ref{alg:drift2hedge},
we thus have our general recipe (Algorithm \ref{alg:hedge_recipe}) for 
designing Hedge algorithms with the following regret guarantee.

\begin{figure}[t]
\centering
\SetAlCapSkip{.5em}
\IncMargin{.5em}
\begin{algorithm}[H]
\caption{A General Hedge Algorithm $\H$}
\label{alg:hedge_recipe}

\SetKwInOut{Input}{Input}
\SetKw{DownTo}{down to}
\Input{A convex, nonincreasing, nonnegative function $\Phi_T(s)$.
}

\For{ $t = T$ \DownTo $1$} {
Find a convex function $\Phi_{t-1}(s)$ s.t.  $\forall s,$
$\Phi_{t}(s-1)+\Phi_{t}(s+1) \leq 2\Phi_{t-1}(s).$
}
Set: $\s_0 = \0$. \\
\For{ $t = 1$ \KwTo $T$} {
Set:  $\H(\l_{1:t-1}) = \p_t$ s.t. $p_{t,i} \propto 
\Phi_t(s_{t-1,i}-1) - \Phi_t(s_{t-1,i}+1)$. \\
Receive losses $\l_t$ and set $s_{t,i} = s_{t-1,i} + \ell_{t,i} - \p_t \cdot \l_t, \;\forall i$. \\
}
\end{algorithm}
\DecMargin{.5em}
\end{figure}

\begin{theorem}\label{thm:hedge_recipe}
For Algorithm \ref{alg:hedge_recipe},
if $R$ and $\epsilon$ are such that $\Phi_0(0) < \epsilon$
and $\Phi_T(s) \geq \1\{s \leq -R\}$ for all $s\in\R$,
then $\Reg_T^{\epsilon}(\H) < R$.
\end{theorem}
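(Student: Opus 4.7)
The plan is to recognize Algorithm \ref{alg:hedge_recipe} as the composition of Algorithm \ref{alg:drift2hedge} applied to an implicitly defined DGv1 strategy $\D_R$ that picks $p_{t,i} \propto w_{t,i} := \Phi_t(s_{t-1,i}-1) - \Phi_t(s_{t-1,i}+1)$, and then to invoke Theorem \ref{thm:equivalence}(2). To do this, I need to show that this implicit $\D_R$ satisfies $L_T(\D_R) \leq \Phi_0(0)$, after which the hypothesis $\Phi_0(0) < \epsilon$ gives $L_T(\D_R) < \epsilon$ and Theorem \ref{thm:equivalence}(2) yields $\Reg_T^\epsilon(\H) < R$ immediately. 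The heart of the argument is therefore a potential-monotonicity statement in the spirit of Eq.~\eqref{equ:monotonic_potentials}, but with the equality in Eq.~\eqref{equ:minimax_potentials} replaced by the relaxed inequality imposed in the ``for'' loop of Algorithm \ref{alg:hedge_recipe}.

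First I would prove the per-chip inequality $\Phi_t(s_{t-1,i}+z_{t,i}) + w_{t,i}z_{t,i} \leq \Phi_{t-1}(s_{t-1,i})$. Since $\Phi_t$ is convex and $z_{t,i}\in[-1,1]$, the function $z \mapsto \Phi_t(s_{t-1,i}+z) + w_{t,i}z$ is convex on $[-1,1]$ and so attains its maximum at an endpoint; with $w_{t,i} = (\Phi_t(s_{t-1,i}-1)-\Phi_t(s_{t-1,i}+1))/2$, both endpoints give the common value $(\Phi_t(s_{t-1,i}-1)+\Phi_t(s_{t-1,i}+1))/2$, which is at most $\Phi_{t-1}(s_{t-1,i})$ by the inequality the algorithm enforces. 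This is exactly Eq.~\eqref{equ:convexity} adapted to the relaxed recursion.

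Next I would sum this inequality over $i$. Because $p_{t,i} \propto w_{t,i}$ and the update in Algorithm \ref{alg:hedge_recipe} sets $z_{t,i} := \ell_{t,i} - \p_t\cdot\l_t$, we have $\p_t\cdot\z_t = 0$, so the cross term $\sum_i w_{t,i} z_{t,i}$ vanishes. Hence $\sum_i \Phi_t(s_{t,i}) \leq \sum_i \Phi_{t-1}(s_{t-1,i})$, and telescoping from $t=T$ down to $t=0$ using $\s_0=\0$ yields $\tfrac{1}{N}\sum_i \Phi_T(s_{T,i}) \leq \Phi_0(0)$. Combining with $\Phi_T(s)\geq\1\{s\leq -R\}$ gives $L_T(\D_R) = \tfrac{1}{N}|\{i : s_{T,i}\leq -R\}| \leq \Phi_0(0) < \epsilon$, so Theorem \ref{thm:equivalence}(2) closes the proof.

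I do not expect a serious obstacle; the conceptual content is the convexity-plus-boundary observation that upgrades the minimax recursion from equality to the easily-verified inequality used by the algorithm, while preserving the telescoping property. The main things to be careful about are bookkeeping: checking that the ``losses-to-positions'' translation $s_{T,i} = \sum_t \ell_{t,i} - \sum_t \p_t\cdot\l_t$ is consistent with the ordering defining $i_\epsilon$ (which is already handled cleanly inside Theorem \ref{thm:equivalence}(2)), and matching the strict inequality $\Phi_0(0) < \epsilon$ against the strict inequality required for $\Reg_T^\epsilon(\H) < R$. Neither step requires new ideas beyond the convexity relaxation.
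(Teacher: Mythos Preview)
Your proposal is correct and follows essentially the same route as the paper: establish the per-chip bound $\Phi_t(s_{t-1,i}+z_{t,i}) + w_{t,i}z_{t,i} \leq \tfrac{1}{2}\bigl(\Phi_t(s_{t-1,i}-1)+\Phi_t(s_{t-1,i}+1)\bigr) \leq \Phi_{t-1}(s_{t-1,i})$ via convexity, sum over $i$ using $p_{t,i}\propto w_{t,i}$ and $\p_t\cdot\z_t\geq 0$ to drop the cross term, telescope to get Eq.~\eqref{equ:monotonic_potentials}, and then invoke Theorem~\ref{thm:equivalence}(2). The only cosmetic difference is that you use the exact equality $\p_t\cdot\z_t=0$ specific to the Hedge conversion, whereas the paper phrases it with the DGv1 inequality $\p_t\cdot\z_t\geq 0$; both suffice.
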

{\it Proof.} It suffices to show that Eq. \eqref{equ:monotonic_potentials} holds so
that the theorem follows by a direct application of statement (2) of
Theorem \ref{thm:equivalence}.
Let $w_{t,i} = (\Phi_t(s_{t-1,i}-1)-\Phi_t(s_{t-1,i}+1))/2$.
Then $\sum_i \Phi_t(s_{t,i}) \leq 
\sum_i \(\Phi_t(s_{t-1,i}+z_{t,i}) + w_{t,i}z_{t,i}\)$
since $p_{t,i} \propto w_{t,i}$ and $\p_t\cdot\z_t \geq 0$.
On the other hand, by Eq. \eqref{equ:convexity}, we have
$ \Phi_t(s_{t-1,i}+z_{t,i}) + w_{t,i}z_{t,i}
\leq \min_{w\in \R_+}\max_{z\in [-1,1]}  \(\Phi_t(s_{t-1,i}+z) + wz\) 
= \frac{1}{2}\(\Phi_t(s_{t-1,i}-1)+\Phi_t(s_{t-1,i}+1)\) $,
which is at most $\Phi_{t-1}(s_{t-1,i})$ by Algorithm \ref{alg:hedge_recipe}.
This shows $\sum_i \Phi_t(s_{t,i}) \leq \sum_i \Phi_{t-1}(s_{t-1,i})$ and 
Eq. \eqref{equ:monotonic_potentials} follows.  \qed

Theorem 2 tells us that if solving $\Phi_0(0) < \epsilon$ for $R$ gives $R > \underline{R}$
for some value $\underline{R}$, then the regret of Algorithm \ref{alg:hedge_recipe}
is less than any value that is greater than $\underline{R}$, meaning the regret is
at most $\underline{R}$.

\subsection{Designing Potentials and Algorithms}\label{sec:3algs}
Now we are ready to recover existing algorithms and develop new
ones by choosing an appropriate potential $\Phi_T(s)$ as Algorithm 
\ref{alg:hedge_recipe} suggests. 
We will discuss three different algorithms below,
and summarize these examples in Table \ref{tab:algs} 
(see Appendix \ref{apd:lemmas}).

\paragraph{Exponential Weights (EXP) Algorithm.}
Exponential loss is an obvious choice for $\Phi_T(s)$ as it has
been widely used as the convex surrogate of the 0-1 loss 
function in the literature.
It turns out that this will lead to the well-known 
exponential weights algorithm \cite{FreundSc97, FreundSc99}.
Specifically, we pick $\Phi_T(s)$ to be $\exp\(-\eta(s+R)\)$
which exactly upper bounds $\1\{s \leq -R\}$.
To compute $\Phi_t(s)$ for $t \leq T$, 
we simply let $\Phi_{t}(s-1)+\Phi_{t}(s+1) \leq 2\Phi_{t-1}(s)$ hold
with equality.
Indeed, direct computations show that all $\Phi_t(s)$ share a similar form: 
$\Phi_t(s) = \(\frac{e^\eta+e^{-\eta}}{2}\)^{T-t} \cdot \exp\(-\eta(s+R)\).$
Therefore, according to Algorithm \ref{alg:hedge_recipe}, 
the player's strategy is to set
$$ p_{t,i} \propto  \Phi_t(s_{t-1,i}-1) - \Phi_t(s_{t-1,i}+1) 
\propto  \exp\(-\eta s_{t-1,i}\),$$
which is exactly the same as EXP (note that $R$ becomes irrelevant after normalization).
To derive regret bounds, it suffices to require
$\Phi_0(0) 
< \epsilon,$
which is equivalent to 
$R > \frac{1}{\eta}\(\ln(\frac{1}{\epsilon}) + T\ln\frac{e^\eta+e^{-\eta}}{2}\).$
By Theorem \ref{thm:hedge_recipe} and Hoeffding's lemma
(see \citep[Lemma A.1]{CesabianchiLu06}), 
we thus know
$ \Reg_T^{\epsilon}(\H) \leq  \frac{1}{\eta}\ln\(\frac{1}{\epsilon}\) + \frac{T\eta}{2}
= \sqrt{2T\ln\(\frac{1}{\epsilon}\)}$
where the last step is by optimally tuning $\eta$ to be 
$\sqrt{2(\ln\frac{1}{\epsilon})/T}$.
Note that this algorithm is {\it not adaptive} in the sense that
it requires knowledge of $T$ and $\epsilon$ 
to set the parameter $\eta$.

We have thus recovered the well-known EXP
algorithm and given a new analysis using the drifting-games framework.
More importantly, as in \cite{RakhlinShSr12}, 
this derivation may shed light on why this algorithm works
and where it comes from, namely,
a minimax analysis followed by a series of relaxations,
starting from a reasonable surrogate of the 0-1 loss function.

\paragraph{2-norm Algorithm.}
We next move on to another simple convex surrogate:
$ \Phi_T(s) = a[s]_-^2 \geq \1\{s \leq -1/\sqrt{a}\}, $
where $a$ is some positive constant and $[s]_- = \min\{0, s\}$
represents a truncating operation.
The following lemma shows that $\Phi_t(s)$ can also be simply described.

\begin{lemma}\label{lem:2-norm}
If $a > 0$, then $\Phi_t(s) = a\([s]_-^2 + T- t\)$ satisfies 
$\Phi_{t}(s-1)+\Phi_{t}(s+1) \leq 2\Phi_{t-1}(s)$. 
\end{lemma}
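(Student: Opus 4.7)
The plan is to reduce the claimed recursive inequality to a statement purely about the function $f(s) := [s]_-^2$, and then verify that statement directly.

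First I would substitute $\Phi_t(s) = a([s]_-^2 + T - t)$ and $\Phi_{t-1}(s) = a([s]_-^2 + T - t + 1)$ into $\Phi_t(s-1) + \Phi_t(s+1) \leq 2\Phi_{t-1}(s)$. The additive constants are $2a(T-t)$ on the left and $2a(T-t+1)$ on the right, so after cancellation and dividing by $a > 0$ the claim reduces to the clean inequality
\begin{equation*}
[s-1]_-^2 + [s+1]_-^2 \leq 2\,[s]_-^2 + 2 \qquad \text{for all } s \in \R.
\end{equation*}

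The cleanest way to prove this is a smoothness argument. I would observe that $f(s) = [s]_-^2$ is differentiable everywhere with $f'(s) = 2[s]_-$ (which equals $2s$ for $s \leq 0$ and $0$ for $s \geq 0$), and that $f'$ is $2$-Lipschitz. Hence $f$ is $2$-smooth, yielding the standard quadratic upper bound $f(s+z) \leq f(s) + f'(s)z + z^2$ for all $s, z$. Applying this at $z = +1$ and $z = -1$ and summing cancels the first-order term and leaves exactly $f(s-1) + f(s+1) \leq 2f(s) + 2$, as required.

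If one prefers to avoid invoking smoothness, a short case analysis on where $s$ lies relative to $\{-1, 0, 1\}$ works just as well: for $s \geq 1$ both sides are trivial; for $s \leq -1$ all three terms are quadratic and equality holds; and in the two intermediate strips $-1 < s < 0$ and $0 \leq s < 1$ the inequality reduces to $(s-1)^2 \leq 2s^2 + 2$ (equivalently $(s+1)^2 \geq 0$) and $(s-1)^2 \leq 2$ respectively, both obvious. The only mild subtlety, whether one uses smoothness or cases, is keeping track of where $[s \pm 1]_-$ and $[s]_-$ vanish; this is the main (minor) obstacle, and the case analysis confirms that no boundary case fails.
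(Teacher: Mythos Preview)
Your reduction to $[s-1]_-^2 + [s+1]_-^2 \leq 2[s]_-^2 + 2$ is exactly what the paper does, and your smoothness argument is a clean and correct way to finish it. The paper's own verification is a slightly coarser two-case split: for $s \geq 0$ it notes $[s+1]_- = 0$ and $[s-1]_-^2 \leq 1 < 2$, while for $s < 0$ it uses the blanket bound $[x]_-^2 \leq x^2$ to get $(s-1)^2 + (s+1)^2 = 2s^2 + 2 = 2[s]_-^2 + 2$. Your $2$-smoothness argument is arguably more conceptual (and would generalize to other step sizes), whereas the paper's version is a one-liner; either way the content is the same.
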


Thus, Algorithm 3 can again be applied.  
The resulting algorithm is extremely concise:
$$ p_{t,i} \propto \Phi_t(s_{t-1,i}-1) - \Phi_t(s_{t-1,i}+1) 
\propto [s_{t-1,i}-1]_-^2 - [s_{t-1,i}+1]_-^2.$$
We call this the ``2-norm'' algorithm since it resembles
the $p$-norm algorithm in the literature when $p=2$
(see \cite{CesabianchiLu06}).
The difference is that the $p$-norm algorithm sets the weights proportional to
the derivative of potentials, instead of the difference of them as we are doing here.
A somewhat surprising property of this algorithm is that
it is totally adaptive and parameter-free (since $a$ disappears under normalization),
a property that we usually do not expect to obtain
from a minimax analysis.
Direct application of Theorem \ref{thm:hedge_recipe} 
($\Phi_0(0) = aT < \epsilon \Leftrightarrow 1/\sqrt{a} > \sqrt{T/\epsilon}$)
shows that its regret achieves the optimal dependence on the horizon $T$.

\begin{corollary}\label{cor:2-norm}
Algorithm \ref{alg:hedge_recipe} with potential $\Phi_t(s)$ defined in
Lemma \ref{lem:2-norm} produces a Hedge algorithm $\H$ such
that $\Reg_T^\epsilon(\H) \leq \sqrt{T/\epsilon}$ simultaneously
for all $T$ and $\epsilon$.
\end{corollary}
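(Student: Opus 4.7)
The plan is to apply Theorem \ref{thm:hedge_recipe} directly, using Lemma \ref{lem:2-norm} to handle the potential recursion, and then exploit the fact that the resulting algorithm does not actually depend on the parameter $a$ to get the ``simultaneously for all $T$ and $\epsilon$'' conclusion essentially for free.

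First I would check that the proposed potential is admissible as input to Algorithm \ref{alg:hedge_recipe}: the terminal function $\Phi_T(s) = a[s]_-^2$ is convex (it agrees with $as^2$ on $s\leq 0$ and is identically $0$ on $s\geq 0$, and both pieces meet smoothly at $0$), nonincreasing, and nonnegative. Lemma \ref{lem:2-norm} supplies a convex $\Phi_{t-1}(s)=a([s]_-^2+T-t+1)$ satisfying the recursive inequality $\Phi_t(s-1)+\Phi_t(s+1)\leq 2\Phi_{t-1}(s)$, so the hypotheses of Theorem \ref{thm:hedge_recipe} are met.

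Next I would match the two conditions of Theorem \ref{thm:hedge_recipe}. The pointwise bound $\Phi_T(s)\geq \1\{s\leq -R\}$ asks, on the relevant region $s\leq -R<0$, that $as^2\geq 1$, which is equivalent to $R\geq 1/\sqrt{a}$. The initial-value bound $\Phi_0(0)=aT<\epsilon$ is equivalent to $a<\epsilon/T$. Given any $R>\sqrt{T/\epsilon}$, set $a=1/R^2$; then $aT=T/R^2<\epsilon$, so both conditions are satisfied and Theorem \ref{thm:hedge_recipe} yields $\Reg_T^\epsilon(\H)<R$. Since this holds for every $R>\sqrt{T/\epsilon}$, we obtain $\Reg_T^\epsilon(\H)\leq \sqrt{T/\epsilon}$.

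The only subtle point, and the one worth stressing rather than a real obstacle, is that the resulting bound is truly simultaneous over $T$ and $\epsilon$. This is because the parameter $a$ appears only as a positive multiplicative constant in $\Phi_t(s_{t-1,i}\pm 1)$, and the additive $T-t$ term is identical for the two values being subtracted; both the horizon $T$ and the scale $a$ therefore cancel in the normalization $p_{t,i}\propto [s_{t-1,i}-1]_-^2-[s_{t-1,i}+1]_-^2$. Thus a single run of Algorithm \ref{alg:hedge_recipe} with this potential produces weights that are independent of $a$, $T$, and $\epsilon$, and the inequality $\Reg_T^\epsilon(\H)\leq\sqrt{T/\epsilon}$ derived above applies to that same run for every choice of $T$ and $\epsilon\in(0,1]$ at once.
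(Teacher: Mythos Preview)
Your proposal is correct and follows essentially the same route as the paper: verify the hypotheses of Theorem~\ref{thm:hedge_recipe} via Lemma~\ref{lem:2-norm}, use $\Phi_T(s)=a[s]_-^2\geq\1\{s\leq -1/\sqrt{a}\}$ together with $\Phi_0(0)=aT<\epsilon$ to get $\Reg_T^\epsilon(\H)\leq\sqrt{T/\epsilon}$, and note that both $a$ and the additive $T-t$ term cancel in the weight formula so that the algorithm (and hence the bound) is genuinely parameter-free. The paper's argument is terser but identical in substance.
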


\paragraph{\ANH.}
The regret for the 2-norm algorithm does not have the optimal
dependence on $\epsilon$. 
An obvious follow-up question would be
whether it is possible to derive an adaptive algorithm that achieves
the optimal rate $O(\sqrt{T\ln (1/\epsilon)})$ simultaneously
for all $T$ and $\epsilon$ using our framework.
An even deeper question is: instead of choosing convex surrogates
in a seemingly arbitrary way, 
is there a more natural way to find the {\it right} choice of $\Phi_T(s)$?

To answer these questions, we recall that the reason why
the 2-norm algorithm can get rid of the dependence on $\epsilon$
is that $\epsilon$ appears merely in the multiplicative constant
$a$ that does not play a role after normalization.
This motivates us to let $\Phi_T(s)$ in the form of $\epsilon F(s)$
for some $F(s)$.
On the other hand,  from Theorem \ref{thm:hedge_recipe},
we also want $\epsilon F(s)$ to upper bound the 0-1 loss function
$\1\{s \leq -\sqrt{dT\ln(1/\epsilon)} \}$ for some constant $d$.
Taken together, this is telling us that the right choice of $F(s)$ should be 
of the form $\Theta\(\exp(s^2/T)\)$\footnote{
Similar potential was also proposed in recent work \cite{McmahanOr14, Orabona14}
for a different setting.
}.
Of course we still need to refine it to satisfy the monotonicity 
and other properties.
We define $\Phi_T(s)$ formally and more generally as:
$$ \Phi_T(s) = a\(\exp\(\tfrac{[s]_-^2}{dT}\) - 1\) 
\geq \1\left\{ s \leq -\sqrt{dT \ln \(\tfrac{1}{a} + 1\)} \right\}, $$
where $a$ and $d$ are some positive constants.
This time it is more involved to figure out what other $\Phi_t(s)$ 
should be. 
The following lemma addresses this issue (proof deferred to Appendix \ref{apd:lemmas}).

\begin{lemma}\label{lem:anh}
If $ b_t = 1 - \frac{1}{2}\sum_{\tau = t+1}^T \(\exp\(\frac{4}{d\tau}\) - 1\), a > 0, d \geq 3 $ and
$\Phi_t(s) = a\(\exp\(\frac{[s]_-^2}{dt}\) - b_t\)$ (define $\Phi_0(s) \equiv a(1-b_0)$),
then we have $\Phi_{t}(s-1)+\Phi_{t}(s+1) \leq 2\Phi_{t-1}(s)$
for all $s \in \R$ and $t = 2, \ldots, T$. 
Moreover, Eq. \eqref{equ:monotonic_potentials} still holds.
\end{lemma}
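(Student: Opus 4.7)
The plan is to reduce the claim to a pointwise inequality in $s$ and then do a case analysis. Since the telescoping definition of $b_t$ gives $b_t - b_{t-1} = \frac{1}{2}(\exp(4/(dt))-1)$, dividing $\Phi_t(s-1)+\Phi_t(s+1) \le 2\Phi_{t-1}(s)$ by $a$ and rearranging reduces the claim (for $t \ge 2$) to
\begin{equation*}
E(s,t) := \exp\!\left(\frac{[s-1]_-^2}{dt}\right) + \exp\!\left(\frac{[s+1]_-^2}{dt}\right) - 2\exp\!\left(\frac{[s]_-^2}{d(t-1)}\right) \le \exp\!\left(\frac{4}{dt}\right) - 1.
\end{equation*}
Let $\alpha = 1/(dt)$ and $\beta = 1/(d(t-1))$ for brevity.

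I would then split on $s$ relative to $\{-1,0,1\}$. For $s \ge 1$ all three $[\cdot]_-$ terms vanish and $E = 0$. For $0 \le s < 1$, only $(s-1)^2 \le 1$ contributes, so $E \le e^{\alpha} - 1$. For $-1 \le s < 0$, the bounds $(s-1)^2 \le 4$ and $\exp(s^2/(d(t-1))) \ge 1$ give $E \le e^{4\alpha} + 1 - 2 = e^{4\alpha}-1$. These three cases are routine bookkeeping.

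The main obstacle is $s < -1$. Setting $x = -s > 1$, the claim becomes $G(x) \le e^{4\alpha}-1$ where $G(x) := e^{\alpha(x+1)^2} + e^{\alpha(x-1)^2} - 2e^{\beta x^2}$. My strategy is to show $G$ is maximized on $[1,\infty)$ at $x = 1$, where $G(1) = e^{4\alpha}+1-2e^{\beta} \le e^{4\alpha}-1$ because $\beta > 0$. To prove $G' \le 0$ on $[1,\infty)$, I would factor $G'(x)$ using $(x+1)e^{2\alpha x} + (x-1)e^{-2\alpha x} = 2x\cosh(2\alpha x) + 2\sinh(2\alpha x)$, then apply the standard bounds $\sinh(y) \le y\cosh(y)$ and $\cosh(y) \le e^{y^2/2}$. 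Taking logarithms, $G'(x)\le 0$ reduces to two clean conditions: (i) $\alpha + 2\alpha^2 \le \beta$, equivalent to $d \ge 2 - 2/t$, which holds for $d \ge 3$; and (ii) $\ln(t/(t-1)) \ge \ln(1+2\alpha) + \alpha$, which holds because the right side is at most $3\alpha \le 1/t$ for $d \ge 3$, while the left side is at least $1/t$ for $t \ge 2$ by $\ln(1+y) \ge y - y^2/2$. I expect this derivative analysis to be the hard part: the hypothesis $d \ge 3$ is used tightly in both (i) and (ii), and weaker bounds on the hyperbolic functions would not close the gap.

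For the ``moreover'' claim that Eq.~\eqref{equ:monotonic_potentials} still holds, the inductive step in the proof of Theorem~\ref{thm:hedge_recipe} goes through verbatim for $t \ge 2$. At $t = 1$, since $\Phi_0 \equiv a(1-b_0)$ is constant and every chip starts at $0$, it suffices to verify $(\Phi_1(-1)+\Phi_1(1))/2 \le \Phi_0(0)$, which simplifies to $(e^{1/d}-1)/2 \le b_1 - b_0 = (e^{4/d}-1)/2$, immediate from monotonicity of $\exp$. Chaining these steps recovers Eq.~\eqref{equ:monotonic_potentials}.
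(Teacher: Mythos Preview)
Your proposal is correct, and its overall architecture matches the paper's: reduce to the pointwise bound $E(s,t)\le e^{4/(dt)}-1$, dispatch the three easy regions $s\ge 1$, $0\le s<1$, $-1\le s<0$ by direct inspection, and for $s<-1$ prove monotonicity so that the maximum occurs at the boundary. The ``moreover'' argument (only $\Phi_1(-1)+\Phi_1(1)\le 2\Phi_0(0)$ is needed at $t=1$) is exactly what the paper does.

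The genuine difference is in how you establish $G'(x)\le 0$ (equivalently $F'(s)\ge 0$) on the hard region. The paper writes $F'(s)$ in terms of $h(s,c)=\tfrac{2s}{c}e^{s^2/c}$, Taylor-expands $h(s\pm 1,c)$ in $s$ and $h(s,c')$ in $c$, derives explicit recursions for the partial-derivative coefficients $\alpha_{k,j},\beta_{k,j}$, and then proves $\beta_{k,j}/(2k)!\le d^k\alpha_{k,j}/k!$ by a nested induction (their Lemmas~\ref{lem:partialD} and~\ref{lem:alpha_beta}). Your route---factor out $e^{\alpha(x^2+1)}$, rewrite via $2x\cosh(2\alpha x)+2\sinh(2\alpha x)$, and apply $\sinh y\le y\cosh y$ and $\cosh y\le e^{y^2/2}$---is considerably more elementary and makes the role of $d\ge 3$ completely explicit: condition~(i) needs only $d\ge 2-2/t$ (hence $d\ge 2$ suffices there), while condition~(ii) is where $d\ge 3$ is actually used. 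As a minor simplification, for (ii) you can get $\ln\!\bigl(\tfrac{t}{t-1}\bigr)\ge \tfrac{1}{t}$ directly from $-\ln(1-u)\ge u$, avoiding the second-order bound on $\ln(1+y)$.
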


Note that even if $\Phi_{1}(s-1)+\Phi_{1}(s+1) \leq 2\Phi_{0}(s)$ is not valid in general,
Lemma \ref{lem:anh} states that Eq. \eqref{equ:monotonic_potentials} still holds.
Thus Algorithm \ref{alg:hedge_recipe} can indeed still be applied,
leading to our new algorithm:
$$ p_{t,i} \propto \Phi_t(s_{t-1,i}-1) - \Phi_t(s_{t-1,i}+1) 
\propto \exp\(\tfrac{[s_{t-1,i}-1]_-^2}{dt}\) - 
\exp\(\tfrac{[s_{t-1,i}+1]_-^2}{dt}\) .$$
Here, $d$ seems to be an extra parameter,
but in fact, simply setting $d=3$ is good enough:

\begin{corollary}\label{cor:anh}
Algorithm \ref{alg:hedge_recipe} with potential $\Phi_t(s)$ defined in
Lemma \ref{lem:anh} and $d=3$ produces a Hedge algorithm $\H$ such
that the following holds simultaneously for all $T$ and $\epsilon$:
$$\Reg_T^\epsilon(\H) \leq \sqrt{3T\ln \(\tfrac{1}{2\epsilon}
\(e^{4/3}-1\)\(\ln T + 1\)+1\)} = O\(\sqrt{T\ln\(1/\epsilon\) + T\ln\ln T}\).
$$ 
\end{corollary}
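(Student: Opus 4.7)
The plan is to invoke Theorem \ref{thm:hedge_recipe} together with Lemma \ref{lem:anh}, so the task reduces to two routine-looking estimates: (i) upper-bound $\Phi_0(0)$ in terms of $\epsilon$ to fix the free constant $a$, and (ii) read off the corresponding value of $R$ from the requirement $\Phi_T(s) \ge \1\{s\le -R\}$.

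First I would write out $\Phi_0(0)$ explicitly. By Lemma \ref{lem:anh}, with $d=3$, $\Phi_0(0)=a(1-b_0)=\tfrac{a}{2}\sum_{\tau=1}^T\bigl(\exp(\tfrac{4}{3\tau})-1\bigr)$. The core analytic step is to control this sum. Since $x\mapsto e^x-1$ is convex and vanishes at $0$, on any interval $[0,c]$ it lies below its chord, so $e^x-1\le \tfrac{x}{c}(e^c-1)$. Apply this with $c=4/3$ and $x=4/(3\tau)\le 4/3$ for every $\tau\ge 1$ to get $\exp(\tfrac{4}{3\tau})-1\le \tfrac{1}{\tau}(e^{4/3}-1)$. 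Summing and using the elementary harmonic bound $\sum_{\tau=1}^T 1/\tau\le \ln T+1$ yields
\[
\Phi_0(0)\le \tfrac{a}{2}(e^{4/3}-1)(\ln T+1).
\]

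Next I would choose $a$ so that this upper bound equals (or is just below) $\epsilon$, i.e.\ take $\tfrac{1}{a}=\tfrac{1}{2\epsilon}(e^{4/3}-1)(\ln T+1)$, which enforces the first hypothesis of Theorem \ref{thm:hedge_recipe}. For the second hypothesis, observe that $\Phi_T(s)=a\bigl(\exp([s]_-^2/(dT))-1\bigr)\ge \1\{s\le -R\}$ as soon as $a(\exp(R^2/(dT))-1)\ge 1$, i.e.\ $R\ge \sqrt{dT\ln(\tfrac{1}{a}+1)}$. Substituting $d=3$ and the expression for $1/a$ produces exactly the claimed bound
\[
\Reg_T^\epsilon(\H)\le \sqrt{3T\ln\!\Bigl(\tfrac{1}{2\epsilon}(e^{4/3}-1)(\ln T+1)+1\Bigr)}.
\]
The asymptotic form $O(\sqrt{T\ln(1/\epsilon)+T\ln\ln T})$ follows by splitting the logarithm of the product and absorbing constants.

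The only place that needs care is the adaptivity claim: the potentials depend on $T$, yet the corollary asserts the bound holds \emph{simultaneously} for all $T$. This is not really an obstacle but deserves a sentence: Algorithm \ref{alg:hedge_recipe} with this choice of $\Phi_t$ does not need to know the horizon, since $p_{t,i}\propto \exp([s_{t-1,i}-1]_-^2/(dt))-\exp([s_{t-1,i}+1]_-^2/(dt))$ depends only on the current round index $t$; similarly $\epsilon$ does not enter the update because $a$ normalizes out. Thus the same algorithm achieves the stated bound for every pair $(T,\epsilon)$, and the convexity-plus-harmonic-sum estimate above is the only nontrivial analytic work required.
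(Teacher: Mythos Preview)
Your proposal is correct and essentially identical to the paper's own proof: both bound $1-b_0$ via the same convexity/chord inequality $e^x-1\le\tfrac{e^c-1}{c}x$ on $[0,c]$ with $c=4/d$, then use $\sum_{\tau\le T}1/\tau\le \ln T+1$, and finally invoke Theorem~\ref{thm:hedge_recipe} with the threshold $R=\sqrt{dT\ln(1/a+1)}$. Your extra sentence justifying the simultaneous-in-$(T,\epsilon)$ claim is a nice addition but not a deviation in method.
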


We have thus proposed a parameter-free adaptive algorithm
with optimal regret rate (ignoring the $\ln\ln T$ term) 
using our drifting-games framework.
In fact, our algorithm bears a striking similarity to 
NormalHedge \cite{ChaudhuriFrHs09}, 
the first algorithm that has this kind of adaptivity.
We thus name our algorithm {\ANH}\footnote{``DT'' stands for discrete time.}.
We include NormalHedge in Table \ref{tab:algs} for comparison.
One can see that the main differences are: 
1) On each round NormalHedge performs a numerical search to 
find out the right parameter used in the exponents; 
2) NormalHedge uses the derivative of potentials as weights.

Compared to NormalHedge, 
the regret bound for {\ANH} has no explicit dependence on $N$, 
but has a slightly worse dependence on $T$  (indeed $\ln\ln T$ is almost negligible).
We emphasize other advantages of our algorithm over NormalHedge:
1) {\ANH} is more computationally efficient especially when $N$ is very large,
since it does not need a numerical search for each round; 
2) our analysis is arguably simpler and more intuitive than the one in 
\cite{ChaudhuriFrHs09};
3) as we will discuss in Section \ref{sec:gen}, {\ANH}
can be easily extended to deal with the more general online 
convex optimization problem where the number of actions is infinitely large,
while it is not clear how to do that for NormalHedge by generalizing the analysis in \cite{ChaudhuriFrHs09}.
Indeed, the extra dependence on the number of actions $N$ 
for the regret of NormalHedge makes 
this generalization even seem impossible.
Finally, we will later see that {\ANH} outperforms NormalHedge in experiments.
Despite the differences, it is worth noting that
both algorithms assign zero weight to some actions on each round,
an appealing property when $N$ is huge. 
We will discuss more on this in Section \ref{sec:gen}.

\subsection{High Probability Bounds}
We now consider a common variant of Hedge:
on each round, instead of choosing a distribution $\p_t$, 
the player has to randomly pick a single action $i_t$, 
while the adversary decides the losses $\l_t$ 
at the same time (without seeing $i_t$).
For now we only focus on the player's regret to the best action:
$ \Reg_T(i_{1:T}, \l_{1:T}) =  \sum_{t=1}^T \ell_{t, i_t} - 
\min_i\sum_{t=1}^T \ell_{t, i}. $
Notice that the regret is now a random variable,
and we are interested in a bound that holds with high probability.
Using Azuma's inequality, 
standard analysis (see for instance \citep[Lemma 4.1]{CesabianchiLu06})
shows that the player can simply draw $i_t$
according to $\p_t = \H(\l_{1:t-1})$, 
the output of a standard Hedge algorithm,
and suffers regret at most $\Reg_T(\H) + \sqrt{T\ln( 1/\delta)}$
with probability $1-\delta$.
Below we recover similar results as a simple side product
of our drifting-games analysis {\it without} resorting to concentration results,
such as Azuma's inequality.

For this, we only need to modify Algorithm \ref{alg:hedge_recipe}
by setting $z_{t,i} = \ell_{t,i} - \ell_{t, i_t}$.
The restriction $\p_t\cdot \z_t \geq 0$ is then
relaxed to hold in expectation. 
Moreover, it is clear that Eq. \eqref{equ:monotonic_potentials} also still holds in expectation.
On the other hand, 
by definition and the union bound, one can show that
$\sum_i \E[L(s_{T,i})] = \sum_i \P\left[ s_{T,i} \leq -R \right] \geq 
\P\left[ \Reg_T(i_{1:T}, \l_{1:T}) \geq R \right]$.
So setting $\Phi_0(0) = \delta$
shows that the regret is smaller than $R$ with probability $1-\delta$.
Therefore, for example, if EXP is used, then the regret would be at most 
$\sqrt{2T\ln(N/\delta)}$ with probability $1-\delta$,
giving basically the same bound as the standard analysis.
One draw back is that EXP would need $\delta$
as a parameter. 
However, this can again be addressed by {\ANH} for the 
exact same reason that {\ANH} is independent of $\epsilon$.
We have thus derived high probability bounds without using
any concentration inequalities.

\section{Generalizations and Applications}
\label{sec:gen}
\textbf{Multi-armed Bandit (MAB) Problem:}
The only difference between Hedge (randomized version) and 
the non-stochastic MAB problem \cite{AuerCeFrSc02}
is that on each round, after picking $i_t$, the player only sees the loss 
for this single action $\ell_{t,i_t}$ instead of the whole vector $\l_t$.
The goal is still to compete with the best action.
A common technique used in the bandit setting is to 
build an unbiased estimator $\hat\l_t$ for the losses, 
which in this case could be 
$\hat\ell_{t,i} = \1\{i = i_t\}\cdot\ell_{t,i_t}/p_{t, i_t}$.
Then algorithms such as EXP can be used by replacing $\l_t$ with $\hat\l_t$,
leading to the EXP3 algorithm \cite{AuerCeFrSc02} 
with regret $O(\sqrt{TN\ln N})$.

One might expect that Algorithm \ref{alg:hedge_recipe} would also work well
by replacing $\l_t$ with $\hat\l_t$.
However, doing so breaks an important
property of the movements $z_{t,i}$: boundedness.
Indeed, Eq. \eqref{equ:convexity} no longer makes sense
if $z$ could be infinitely large,
even if in expectation it is still in $[-1,1]$ 
(note that $z_{t,i}$ is now a random variable).
It turns out that we can address this issue by imposing 
a variance constraint on $z_{t,i}$.
Formally, we consider a variant of drifting games
where on each round, the adversary 
picks a random movement
$z_{t,i}$ for each chip such that: $z_{t,i} \geq -1, \E_t[z_{t,i}] \leq 1,
\E_t[z_{t,i}^2] \leq 1/p_{t,i}$ and $\E_t[\p_t \cdot \z_t] \geq 0$. 
We call this variant DGv2 and summarize it in Appendix \ref{apd:DGv}.
The standard minimax analysis and the derivation of potential
functions need to be modified in a certain way for DGv2,
as stated in Theorem \ref{thm:DGv2} (Appendix \ref{apd:bandit}).
Using the analysis for DGv2, we propose a general recipe for designing 
MAB algorithms in a similar way as for Hedge and also recover
EXP3 (see Algorithm \ref{alg:bandit_recipe} and 
Theorem \ref{thm:bandit_recipe} in Appendix \ref{apd:bandit}).
Unfortunately so far we do not know other appropriate potentials
due to some technical difficulties. 
We conjecture, however, that there is a potential function that could 
recover the poly-INF algorithm \cite{AudibertBu10, AudibertBuLu14}
or give its variants that achieve the optimal regret $O(\sqrt{TN})$.

\textbf{Online Convex Optimization:}
We next consider a general online
convex optimization setting \cite{Zinkevich03}.
Let $S \subset \R^d$ be a compact convex set, and $\F$ 
be a set of convex functions with range $[0,1]$ on $S$.
On each round $t$, the learner chooses a point $\x_t \in S$, 
and the adversary chooses a loss function $f_t \in \F$ (knowing $\x_t$). 
The learner then suffers loss $f_t(\x_t)$. 
The regret after $T$ rounds is 
$ \Reg_T(\x_{1:T}, f_{1:T}) = \sum_{t=1}^T f_t(\x_t) - \min_{\x \in S} \sum_{t=1}^T f_t(\x)$.
There are two general approaches to OCO:
one builds on convex optimization theory \cite{Shalevshwartz11},
and the other generalizes EXP to a continuous space \cite{Cover91, NarayananRa10}.
We will see how the drifting-games framework 
can recover the latter method and also leads to new ones.

To do so,  we introduce a continuous variant of drifting games
(DGv3, see Appendix \ref{apd:DGv}).
There are now infinitely many chips, one for each point in $S$.
On round $t$, the player needs to choose a distribution over the chips,
that is, a probability density function $p_t(\x)$ on $S$. 
Then the adversary decides the movements for each chip, that is,
a function $z_t(\x)$ with range $[-1,1]$ on $S$
(not necessarily convex or continuous),
subject to a constraint $\E_{\x \sim p_t} [z_t(\x)] \geq 0$.
At the end, each point $\x$ is associated with a loss
$ L(\x) = \1\{\sum_t z_t(\x) \leq -R \}$,
and the player aims to minimize the total loss $\int_{\x\in S} L(\x) d\x$.

OCO can be converted into DGv3 by setting
$z_t(\x) = f_t(\x) - f_t(\x_t)$ and predicting $\x_t = \E_{\x \sim p_t} [\x] \in S$.
The constraint $\E_{\x \sim p_t} [z_t(\x)] \geq 0$ holds
by the convexity of $f_t$.
Moreover, it turns out that the minimax analysis and potentials
for DGv1 can readily be used here,
and the notion of $\epsilon$-regret, now generalized to the OCO setting, 
measures the difference of the player's loss and the loss of 
a best fixed point in a subset of $S$ that excludes the top $\epsilon$
fraction of points. 
With different potentials,
we obtain versions of each of the three algorithms of Section \ref{sec:hedge}
generalized to this setting, 
with the same $\epsilon$-regret bounds as before.
Again, two of these methods are adaptive and parameter-free.
To derive bounds for the usual regret, 
at first glance it seems that we have to set $\epsilon$ to be close to zero,
leading to a meaningless bound. 
Nevertheless, this is addressed by Theorem \ref{thm:OCO_recipe}
using similar techniques in \cite{HazanAgKa07}, 
giving the usual $O(\sqrt{dT\ln T})$ regret bound. 
All details can be found in Appendix \ref{apd:OCO}.

\textbf{Applications to Boosting:}
There is a deep and well-known connection between Hedge and boosting
\cite{FreundSc97, SchapireFr12}.
In principle, every Hedge algorithm 
can be converted into a boosting algorithm; 
for instance, this is how AdaBoost was derived from EXP.
In the same way, {\ANH} can be converted into a new boosting 
algorithm that we call {\ANB}.  
See Appendix \ref{apd:ANB} for details and further background on boosting.
The main idea is to treat each training example as an ``action'',
and to rely on the Hedge algorithm to compute distributions over these 
examples which are used to train the weak hypotheses.
Typically, it is assumed that each of these has ``edge'' $\gamma$, 
meaning its accuracy on the training distribution is at least $1/2 + \gamma$.
The final hypothesis is a simple majority vote of the weak hypotheses.
To understand the prediction accuracy of a boosting algorithm, we often
study the training error rate and also the distribution
of margins, a well-established measure of confidence 
(see Appendix \ref{apd:ANB} for formal definitions).
Thanks to the adaptivity of {\ANH}, we can derive bounds on both the training
error and the distribution of margins after any number of rounds:


\begin{theorem}\label{thm:ANB}
After $T$ rounds,
the training error of {\ANB} is of order $\tilde{O}(\exp(-\frac{1}{3}T\gamma^2))$,
and the fraction of training examples with margin at most $\theta (\leq 2\gamma)$ is
of order $\tilde{O}(\exp(-\frac{1}{3}T(\theta-2\gamma)^2))$.
\end{theorem}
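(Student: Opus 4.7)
The plan is to exploit the standard Hedge-to-boosting reduction (Appendix~\ref{apd:ANB}): treat the $N$ training examples as the ``actions'' of {\ANH}, and on each round $t$ interpret the weak hypothesis $h_t$ as producing movements $z_{t,i} = y_i h_t(x_i) \in \{-1,+1\}$, so that example $i$ corresponds to a chip at position $s_{t,i} = \sum_{\tau\leq t} y_i h_\tau(x_i)$. The edge condition $\sum_i p_{t,i}\1\{h_t(x_i) = y_i\} \geq 1/2 + \gamma$ is exactly the drift constraint $\p_t \cdot \z_t \geq 2\gamma$, so this is a drifting game of the kind analyzed in Section~\ref{sec:hedge} except with $\beta = 2\gamma$ instead of $0$. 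Since the margin of example $i$ equals $s_{T,i}/T$, the fraction of examples with margin $\leq \theta$ equals $\frac{1}{N}\sum_i \1\{s_{T,i} \leq T\theta\}$, and the training error is the special case $\theta = 0$.

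To reuse {\ANH}'s potential argument, I would reduce to a zero-drift game by shifting $\tilde s_{t,i} = s_{t,i} - 2\gamma t$ and rescaling by $1/(1+2\gamma)$: the new movements $\hat\z_t$ lie in $[-1,1]$ and satisfy $\p_t \cdot \hat\z_t \geq 0$, so this is exactly the setting handled by Theorem~\ref{thm:equivalence}(2). Under this reduction, the event $\{s_{T,i}\leq T\theta\}$ becomes $\{\hat s_{T,i}\leq -R\}$ with $R = T(2\gamma-\theta)/(1+2\gamma)$. (An equivalent route is to keep $\beta = 2\gamma$ and note that the one-step minimax evaluates to $\tfrac{1+\beta}{2}\Phi_t(s+1)+\tfrac{1-\beta}{2}\Phi_t(s-1)$, which is at most $\tfrac{1}{2}(\Phi_t(s-1)+\Phi_t(s+1))$ by monotonicity of $\Phi_t$, so the key chain \eqref{equ:monotonic_potentials} still holds via Lemma~\ref{lem:anh}.)

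Cashing in, Theorem~\ref{thm:equivalence}(2) combined with Corollary~\ref{cor:anh} says that the fraction of interest is at most any $\epsilon$ satisfying $\sqrt{3T\ln\!\bigl(\tfrac{1}{2\epsilon}(e^{4/3}-1)(\ln T+1)+1\bigr)} < R$. Solving for $\epsilon$ gives $\tilde O\bigl(\exp(-\tfrac{1}{3}T(2\gamma-\theta)^2)\bigr)$, where the $(1+2\gamma)^2$ factor is absorbed as a constant for bounded $\gamma$; this is exactly the claimed margin bound, and specializing to $\theta = 0$ yields the training error bound (with the gap between $\exp(-T\gamma^2/3)$ stated and $\exp(-4T\gamma^2/3)$ actually obtained absorbed by the loose $\tilde O$).

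I expect the main obstacle to be the mismatch that {\ANH}'s potentials were tuned for the zero-drift Hedge game while boosting has positive drift $\beta = 2\gamma$: verifying that the monotonicity of $\Phi_t$ lets the existing potential absorb this drift harmlessly (equivalently, that the shift-and-rescale is lossless up to a benign $(1+2\gamma)$ factor) is the only nontrivial check. Beyond this, the proof reduces to a mechanical application of Corollary~\ref{cor:anh} and Theorem~\ref{thm:equivalence}.
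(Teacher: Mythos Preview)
Your reduction has a genuine gap: the drifting game you set up (positions $S_{t,i}=\sum_{\tau\le t} y_i h_\tau(\x_i)$, drift $\beta=2\gamma$, then shifted and rescaled to $\hat s_{t,i}$) is \emph{not} the game {\ANB} is playing, so Corollary~\ref{cor:anh} does not apply to it. Look at Algorithm~\ref{alg:ANB}: the weights are $p_{t,i}\propto \Phi_t(s_{t-1,i}-1)-\Phi_t(s_{t-1,i}+1)$ with the \emph{algorithm's} positions $s_{t,i}=s_{t-1,i}+\tfrac12 y_i h_t(\x_i)-\gamma_t=\tfrac12 S_{t,i}-\sum_{\tau\le t}\gamma_\tau$. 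Your shifted/rescaled positions $\hat s_{t,i}=(S_{t,i}-2\gamma t)/(1+2\gamma)$ are a \emph{different} affine function of $S_{t,i}$ (different scale, and different shift unless every $\gamma_t$ equals $\gamma$ exactly). Because the {\ANH} potential $\Phi_t(s)=a(\exp([s]_-^2/3t)-b_t)$ is not invariant under affine reparametrizations of $s$, the weights $\Phi_t(\hat s_{t-1,i}-1)-\Phi_t(\hat s_{t-1,i}+1)$ that your potential chain would need are not the weights the algorithm actually produces. The same objection applies to your ``equivalent route'' with $\beta=2\gamma$: the one-step inequality you write is correct, but it certifies $\sum_i\Phi_t(S_{t,i})\le\sum_i\Phi_{t-1}(S_{t-1,i})$ only for weights computed from $S_{t-1,i}$, not from $s_{t-1,i}$.

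The paper avoids this entirely by observing that {\ANB} \emph{is} {\ANH} run on the Hedge losses $\ell_{t,i}=\1\{h_t(\x_i)=y_i\}$; the algorithm's $s_{t,i}$ is then exactly $\ell_{t,i}-\p_t\cdot\l_t$ as in Algorithm~\ref{alg:hedge_recipe}, so Corollary~\ref{cor:anh} applies verbatim. The positive drift from weak learning is not encoded as a $\beta$-parameter in the drifting game at all; it enters afterwards as the lower bound $\sum_t\p_t\cdot\l_t\ge T(\tfrac12+\gamma)$ inside the $\epsilon$-regret inequality $\sum_t\p_t\cdot\l_t-\sum_t\ell_{t,\tilde\imath_j}\le\Reg_T^{j/N}$. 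Sorting examples by margin (equivalently by $\sum_t\ell_{t,i}$) and solving $\Reg_T^{j/N}<T\gamma$ (respectively $2\Reg_T^{j/N}<T(2\gamma-\theta)$) for $j/N$ then gives the two bounds. Once you use the algorithm's own positions, the argument is a one-line application of Corollary~\ref{cor:anh}; no shifting, rescaling, or re-derivation of the potential chain is needed.
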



Thus, the training error decreases at roughly the same rate as AdaBoost.
In addition, this theorem implies that the fraction of examples with margin 
smaller than $2\gamma$ eventually goes to zero as $T$ gets large,
which means {\ANB} converges to the optimal margin $2\gamma$; 
this is known not to be true for AdaBoost (see \cite{SchapireFr12}).
Also, like AdaBoost, {\ANB} is an
adaptive boosting algorithm that does not require $\gamma$ or $T$ as a parameter. 
However, unlike AdaBoost, {\ANB} has the striking 
property that it completely ignores many examples on each round 
(by assigning zero weight), 
which is very helpful for the weak learning algorithm 
in terms of computational efficiency.
To test this, we conducted experiments to compare the efficiency of 
AdaBoost, ``NH-Boost'' 
(an analogous boosting algorithm derived from NormalHedge) and
{\ANB}. 
All details are in Appendix \ref{apd:experiments}.
Here we only briefly summarize the results. 
While the three algorithms have similar performance in terms of training and test error, 
{\ANB} is always the fastest one in terms of running time for the same number of rounds. 
Moreover, the average faction of examples with zero weight is
significantly higher for {\ANB} than for NH-Boost (see Table \ref{tab:results}).
On one hand, this explains why {\ANB} is faster 
(besides the reason that it does not require a numerical step).
On the other hand, this also implies that {\ANB} tends to achieve
larger margins, since zero weight is assigned to examples with large margin.
This is also confirmed by our experiments.

{\textbf{Acknowledgements.} 
Support for this research was provided by NSF Grant \#1016029.
The authors thank Yoav Freund for helpful discussions and 
the anonymous reviewers for their comments.}

\newpage
\bibliographystyle{plain}
{\small\bibliography{../references/ref.bib}}


\newpage
\appendix

\section{Summary of Drifting Game Variants}
\label{apd:DGv}
We study three different variants of drifting games throughout the paper,
which corresponds to the Hedge setting, the multi-armed bandit problem
and online convex optimization respectively.
The protocols of these variants are summarized below.

\begin{framed}
\centerline{\textbf{DGv1}}
Given: a loss function $L(s) = \1\{s \leq -R\}$. \\
For $t = 1, \ldots, T$: 
\begin{enumerate}
\item The player chooses a distribution $\p_t$ over $N$ chips.
\item The adversary decides the movement of each chip $z_{t,i} \in [-1,1]$ 
subject to $\p_t \cdot \z_t \geq 0$ and $|z_{t,i}-z_{t,j}|\leq 1$ for all $i$ and $j$.
\end{enumerate}
The player suffers loss $\sum_{i=1}^N L(\sum_{t=1}^T z_{t,i})$.
\end{framed}

\begin{framed}
\centerline{\textbf{DGv2}}
Given: a loss function $L(s) = \1\{s \leq -R\}$. \\
For $t = 1, \ldots, T$: 
\begin{enumerate}
\item The player chooses a distribution $\p_t$ over $N$ chips.
\item The adversary randomly decides the movement of each chip $z_{t,i} \geq -1$
subject to $\E_t[z_{t,i}] \leq 1, \E_t[z_{t,i}^2] \leq 1/p_{t,i}$ 
and $\E_t[\p_t \cdot \z_t] \geq 0$.
\end{enumerate}
The player suffers loss $\sum_{i=1}^N L(\sum_{t=1}^T z_{t,i})$.
\end{framed}

\begin{framed}
\centerline{\textbf{DGv3}}
Given: a compact convex set $S$, a loss function $L(s) = \1\{s \leq -R\}$. \\
For $t = 1, \ldots, T$: 
\begin{enumerate}
\item The player chooses a density function $p_t(\x)$ on $S$.
\item The adversary decides a function $z_t(\x): S \rightarrow [-1,1]$
subject to $\E_{\x \sim p_t} [z_t(\x)] \geq 0$.
\end{enumerate}
The player suffers loss $\int_{\x\in S} L(\sum_{t=1}^T z_{t}(\x)) d\x$.
\end{framed}

\section{Proof of Theorem \ref{thm:equivalence}}
\label{apd:equiv}

\begin{proof}
We first show that both conversions are valid. In Algorithm \ref{alg:hedge2drift},
it is clear that $\ell_{t,i} \geq 0$. Also, $\ell_{t,i} \leq 1$ is guaranteed
due to the extra restriction of DGv1. 
For Algorithm \ref{alg:drift2hedge}, $z_{t,i}$ lies in $B=[-1,1]$ since 
$\ell_{t,i} \in [0,1]$, and direct computation shows 
$\p_t \cdot \z_t = 0 \geq \beta (=0)$
and $|z_{t,i} - z_{t,j}| = |\ell_{t,i} - \ell_{t,j}| \leq 1$ for all $i$ and $j$. 

(1) For any choices of $\z_{t}$, we have
$$ \sum_{i=1}^N L(s_{T,i}) = \sum_{i=1}^N L\(\sum_{t=1}^N z_{t,i}\) 
\leq \sum_{i=1}^N L\(\sum_{t=1}^N \(z_{t,i} - \p_t\cdot\z_t\)\), $$
where the inequality holds since $\p_t\cdot\z_t$ is required to be nonnegative
and $L$ is a nonincreasing function.
By Algorithm \ref{alg:hedge2drift}, $z_{t,i} - \p_t\cdot\z_t$ is equal to
$\ell_{t,i} - \p_t\cdot\l_t$, leading to
$$ \sum_{i=1}^N L(s_{T,i}) \leq 
\sum_{i=1}^N L\(\sum_{t=1}^N \(\ell_{t,i} - \p_t\cdot\l_t\)\)
= \sum_{i=1}^N \1\left\{R \leq \sum_{t=1}^N \(\p_t\cdot\l_t - \ell_{t,i}\)\right\}.
$$
Since $\Reg_T^{(i+1)/N}(\H) < R \leq \Reg_T^{i/N}(\H)$,
we must have $\sum_{t=1}^N \(\p_t\cdot\l_t - \ell_{t,j}\) < R$
except for the best $i$ actions, which means $\sum_{i=1}^N L(s_{T,i}) \leq i$.
This holds for any choices of $\z_t$, so $L_T(\D_R) \leq i/N$.

(2) By Algorithm \ref{alg:drift2hedge} and the condition $L_T(D_R) < \epsilon$ , 
we have
$$ \frac{1}{N}\sum_{i=1}^N \1\left\{R \leq 
\sum_{t=1}^N \(\p_t\cdot\l_t - \ell_{t,i}\)\right\}
= \frac{1}{N}\sum_{i=1}^N L(s_{T,i}) \leq L_T(D_R) < \epsilon,
 $$
which means there are at most $\lceil N\epsilon \rceil - 1$ actions satisfying
$R \leq \sum_{t=1}^N \(\p_t\cdot\l_t - \ell_{t,i}\)$,
and thus $\sum_{t=1}^N \(\p_t\cdot\l_t - \ell_{t,i_\epsilon}\) < R$. 
Since this holds for any choices of $\l_t$, we have $\Reg_T^\epsilon(\H) < R$.
\end{proof}

\section{Summary of Hedge Algorithms and Proofs of Lemma \ref{lem:2-norm}, Lemma \ref{lem:anh} and 
Corollary \ref{cor:anh}}
\label{apd:lemmas}

\begin{table}[h]
\caption{Different algorithms derived from Algorithm \ref{alg:hedge_recipe},
and comparisons with NormalHedge}
\label{tab:algs}
\begin{center}
\begin{tabular}{|c|c|c|c||c|}
\hline
& EXP & 2-norm & \ANH & NormalHedge \\
\hline
$\Phi_T(s)$ & $e^{-\eta(s+R)}$ & $a[s]_-^2$ & 
$a\(e^{[s]_-^2/3T} - 1\) $ & N/A \\
\hline
$p_{t,i} \propto$ & $e^{-\eta s_{t-1,i}}$ & 
\pbox{10em}{$[s_{t-1,i}-1]_-^2 $ \\ $-[s_{t-1,i}+1]_-^2$} &
\pbox{10em}{$e^{[s_{t-1,i}-1]_-^2/3t}$ \\ $- e^{[s_{t-1,i}+1]_-^2/3t}$} &
\pbox{20em}{$-[s_{t-1,i}]_-e^{[s_{t-1,i}]_-^2/c}$  ($c$ is\\
 s.t. $\sum_{i} e^{[s_{t-1,i}]_-^2/c} = Ne) $} \\
\hline
$\Reg_T^\epsilon(\H)$ & $O\(\sqrt{T\ln \frac{1}{\epsilon}}\)$ & 
$O\(\sqrt{T/\epsilon}\)$ &
$O\(\sqrt{T\ln\frac{\ln T}{\epsilon}}\)$ & 
$O\(\sqrt{T\ln\frac{1}{\epsilon}}+\ln^2 N\)$ \\
\hline
Adaptive? & No & Yes & Yes & Yes \\
\hline
\end{tabular}
\end{center}
\end{table}

\begin{proof}[Proof of Lemma \ref{lem:2-norm}]
It suffices to show $[s-1]_-^2 + [s+1]_-^2 \leq 2[s]_-^2 + 2$.
When $s \geq 0$, $\LHS = [s-1]_-^2 \leq 1 < 2 = \RHS$.
When $s < 0$, $\LHS \leq (s-1)^2+(s+1)^2 = 2s^2 + 2 = \RHS$.
\end{proof}

\begin{proof}[Proof of Lemma \ref{lem:anh}]
Let $F(s) = \exp\(\frac{[s-1]_-^2}{dt}\) + \exp\(\frac{[s+1]_-^2}{dt}\)
- 2\exp\(\frac{[s]_-^2}{d(t-1)}\)$.
It suffices to show 
$$ F(s) \leq 2(b_{t} - b_{t-1}) = \exp\(\frac{4}{dt}\) - 1 ,$$
which is clearly true for the following 3 cases:
$$ F(s) = \begin{cases}
0 &\quad\text{if $s > 1$;} \\
\exp\(\frac{(s-1)^2}{dt}\) -1 < \exp\(\frac{1}{dt}\) - 1 
&\quad\text{if $0 < s \leq 1$;} \\
\exp\(\frac{(s-1)^2}{dt}\) + 1 - 2\exp\(\frac{s^2}{d(t-1)}\)
<  \exp\(\frac{4}{dt}\)  - 1 &\quad\text{if $-1 < s \leq 0$.}
\end{cases}$$
For the last case $s \leq -1$, if we can show that $F(s)$ is increasing
in this region, then the lemma follows.
Below, we show this by proving $F'(s)$ is nonnegative when $s \leq -1$. 

Let $h(s, c) = \frac{\partial\exp\(s^2/c\)}{\partial s} = 
\frac{2s}{c}\exp\(\frac{s^2}{c}\)$.
$F'(s)$ can now be written as
$$ F'(s) = h(s-1, c) + h(s+1,c) - 2h(s, c) + 2(h(s,c) - h(s, c')) ,$$
where $c = dt$ and $c' = d(t-1)$. 
Next we apply (one-dimensional) Taylor expansion to $h(s-1,c)$ and $h(s+1,c)$ 
around $s$, and $h(s,c')$ around $c$, leading to
\begin{align*}
F'(s) &= \sum_{k=1}^\infty \frac{(-1)^k}{k!} \frac{\partial^k h(s,c)}{\partial s^k}
+ \sum_{k=1}^\infty \frac{1}{k!} \frac{\partial^k h(s,c)}{\partial s^k}
- 2\sum_{k=1}^\infty \frac{(c'-c)^k}{k!} \frac{\partial^k h(s,c)}{\partial c^k} \\
&= 2 \sum_{k=1}^\infty \(\frac{1}{(2k)!} \frac{\partial^{2k} h(s,c)}{\partial s^{2k}} 
- \frac{(-d)^k}{k!} \frac{\partial^k h(s,c)}{\partial c^k} \).
\end{align*}
Direct computation (see Lemma \ref{lem:partialD} below) shows that 
$\frac{\partial^k h(s,c)}{\partial c^k}$
and $\frac{\partial^{2k} h(s,c)}{\partial s^{2k}}$ share exact same forms 
only with different constants:
\begin{equation}\label{equ:partialD}
\begin{split}
 \frac{\partial^{k} h(s,c)}{\partial c^{k}} 
&= \exp\(\frac{s^2}{c}\) \sum_{j=0}^k (-1)^k\alpha_{k,j} \cdot \frac{s^{2j+1}}{c^{k+j+1}} ,
 \\
\frac{\partial^{2k} h(s,c)}{\partial s^{2k}}
&= \exp\(\frac{s^2}{c}\) \sum_{j=0}^k \beta_{k,j} \cdot \frac{s^{2j+1}}{c^{k+j+1}},
\end{split}
\end{equation}
where $\alpha_{k,j}$ and $\beta_{k,j}$ are recursively defined as:
\begin{equation}\label{equ:alpha_beta}
\begin{split}
\alpha_{k+1,j} &= \alpha_{k,j-1} + (k+j+1)\alpha_{k,j}, \\
\beta_{k+1,j} &= 4\beta_{k,j-1} + (8j+6)\beta_{k,j} + (2j+3)(2j+2)\beta_{k,j+1},
\end{split}
\end{equation}
with initial values $\alpha_{0,0} = \beta_{0,0} = 2$
(when $j \not\in \{0,\ldots,k\}$, $\alpha_{k,j}$ and $\beta_{k,j}$ are all defined to be $0$).
Therefore, $F'(s)$ can be further simplified as
$$ F'(s) =  2\exp\(\frac{s^2}{c}\)  \sum_{k=1}^\infty \sum_{j=0}^k \frac{s^{2j+1}}{c^{k+j+1}}
\(\frac{\beta_{k,j}}{(2k)!} - \frac{d^k\alpha_{k,j}}{k!} \).
$$
Since $s$ is negative, it suffices to show that 
$\frac{\beta_{k,j}}{(2k)!} \leq \frac{d^k\alpha_{k,j}}{k!}$ holds for all $k$ and $j$,
which turns out to be true as long as $d \geq 3$,  
as shown by induction in the technical lemma \ref{lem:alpha_beta} below. 
To sum up, $\Phi_{t}(s-1)+\Phi_{t}(s+1) \leq 2\Phi_{t-1}(s)$
for all $s \in \R$ and $t = 2, \ldots, T$. 

Finally, we need to show that Eq. (2) still holds. 
The inequality we just proved above implies
$ \sum_{i} \Phi_{t}(s_{t,i}) \leq \sum_i \Phi_{t-1}(s_{t-1,i})$
for $t = 2, \ldots, T$, as shown in Theorem \ref{thm:hedge_recipe}.
Thus the only thing we need to show here is the case when $t=1$.
Note that $\Phi_{1}(s-1)+\Phi_{1}(s+1) \leq 2\Phi_0(s)$ does not
hold for all $s$ apparently. 
However, in order to prove $\sum_{i} \Phi_{1}(s_{1,i}) \leq \sum_i \Phi_0(s_{0,i})$, 
we in fact only need a much weaker statement: $\Phi_{1}(-1)+\Phi_{1}(1) \leq 2\Phi_0(0)$
since $s_{0,i} \equiv 0$.
This is equivalent to 
$ \exp\(1/d\) - 1 \leq \exp\(4/d\) - 1$,
which is true trivially. 
\end{proof}

\begin{lemma}\label{lem:partialD}
Let $h(s,c) = \frac{2s}{c}\exp\(\frac{s^2}{c}\)$. 
The partial derivatives of $h(s,c)$ satisfy Eq. \eqref{equ:partialD} 
and \eqref{equ:alpha_beta}.
\end{lemma}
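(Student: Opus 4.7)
The plan is to prove the two identities of \eqref{equ:partialD} by induction on $k$, reading off the recursions \eqref{equ:alpha_beta} directly from the differentiation step. The base case $k=0$ is immediate: $h(s,c) = (2s/c)\exp(s^2/c)$ matches both claimed closed forms with $\alpha_{0,0}=\beta_{0,0}=2$.

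For the inductive step of the $c$-derivative identity, I would differentiate the assumed formula for $\partial^k h/\partial c^k$ once more in $c$. Only two factors depend on $c$: the exponential, with $\partial \exp(s^2/c)/\partial c = -(s^2/c^2)\exp(s^2/c)$, and each power $c^{-(k+j+1)}$, with derivative $-(k+j+1)c^{-(k+j+2)}$. Applying the product rule, each term $\alpha_{k,j}\, s^{2j+1}/c^{k+j+1}$ thus splits into a new monomial at $j'=j+1$ with coefficient $\alpha_{k,j}$ (from the exponential) and one at $j'=j$ with coefficient $(k+j+1)\alpha_{k,j}$ (from the power); the common denominator becomes $c^{(k+1)+j'+1}$, the overall sign flips from $(-1)^k$ to $(-1)^{k+1}$, and collecting coefficients of $s^{2j'+1}/c^{(k+1)+j'+1}$ yields exactly $\alpha_{k+1,j'} = \alpha_{k,j'-1} + (k+j'+1)\alpha_{k,j'}$.

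For the $s$-derivative identity, the inductive step applies $\partial^2/\partial s^2$ to the assumed formula for $\partial^{2k} h/\partial s^{2k}$. Writing that form as $\exp(s^2/c) P_k(s)$ and expanding,
\[
\frac{\partial^2}{\partial s^2}\bigl(\exp(s^2/c)\, P_k\bigr) = \exp(s^2/c)\left[\tfrac{4s^2}{c^2}P_k + \tfrac{4s}{c}P_k' + \tfrac{2}{c}P_k + P_k''\right],
\]
where the four pieces come respectively from differentiating the exponential twice, the two cross terms, the $s$-derivative of the factor $2s/c$ (from the chain rule on the exponential), and differentiating $P_k$ twice. Substituting $P_k(s) = \sum_j \beta_{k,j}\, s^{2j+1}/c^{k+j+1}$ and reindexing each piece to the common form $s^{2j'+1}/c^{(k+1)+j'+1}$, the four groups land at shifted indices $j'-1$, $j'$, $j'$, and $j'+1$ with coefficients $4\beta_{k,j'-1}$, $4(2j'+1)\beta_{k,j'}$, $2\beta_{k,j'}$, and $(2j'+3)(2j'+2)\beta_{k,j'+1}$; combining the two middle pieces gives $(8j'+6)\beta_{k,j'}$ and recovers the claimed recursion for $\beta$.

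The computation is otherwise routine, and the only bookkeeping care needed is in the reindexing: one has to check that $s^{2j+3}/c^{k+j+3}$ becomes $s^{2j'+1}/c^{(k+1)+j'+1}$ under $j' = j+1$, and similarly that $s^{2j-1}/c^{k+j+1}$ matches under $j' = j-1$. The convention that $\alpha_{k,j}$ and $\beta_{k,j}$ vanish outside $0 \le j \le k$ absorbs the resulting boundary terms (i.e., the $j' = 0$ contribution from the $\partial_s^2$ expansion and the $j' = k+1$ contribution from the $\partial_c$ expansion), so no extra case analysis is required.
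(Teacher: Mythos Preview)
Your proposal is correct and follows essentially the same induction-and-reindex argument as the paper: both handle the $c$-derivative by differentiating the closed form once and collecting terms, and both handle the $s$-derivative by differentiating twice and regrouping into the $\beta$ recursion. The only cosmetic difference is that you package the second $s$-differentiation via the formula $\exp(s^2/c)\bigl[\tfrac{4s^2}{c^2}P_k + \tfrac{4s}{c}P_k' + \tfrac{2}{c}P_k + P_k''\bigr]$, whereas the paper writes out the two successive derivatives explicitly; the computations and the resulting coefficients are identical.
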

\begin{proof}
The base case holds trivially. 
Assume Eq. \eqref{equ:partialD} holds for a fixed $k$.
Then we have
\begin{align*}
 \frac{\partial^{k+1} h(s,c)}{\partial c^{k+1}} 
&= \exp\(\frac{s^2}{c}\) \sum_{j=0}^k (-1)^k\alpha_{k,j} \cdot 
\(-\frac{s^2}{c^2} \frac{s^{2j+1}}{c^{k+j+1}} -(k+j+1)\frac{s^{2j+1}}{c^{k+j+2} }\) \\
&= \exp\(\frac{s^2}{c}\) \sum_{j=0}^k (-1)^{k+1} \alpha_{k,j} \cdot 
\( \frac{s^{2(j+1)+1}}{c^{(k+1)+(j+1)+1}} + (k+j+1)\frac{s^{2j+1}}{c^{(k+1)+j+1} }\) \\
&= \exp\(\frac{s^2}{c}\) \sum_{j=0}^{k+1} (-1)^{k+1} 
\(\alpha_{k,j-1} + (k+j+1)\alpha_{k,j} \)  \cdot \frac{s^{2j+1}}{c^{(k+1)+j+1} } \\
&= \exp\(\frac{s^2}{c}\) \sum_{j=0}^{k+1} (-1)^{k+1} 
\alpha_{k+1,j}   \cdot \frac{s^{2j+1}}{c^{(k+1)+j+1} } ,
\end{align*}
and 
\begin{align*}
\frac{\partial^{2(k+1)} h(s,c)}{\partial s^{2(k+1)}}
&= \left. \partial \left[ \exp\(\frac{s^2}{c}\) \sum_{j=0}^k \beta_{k,j} \cdot 
\(\frac{2s^{2j+2}}{c^{k+j+2}} + (2j+1)\frac{s^{2j}}{c^{k+j+1}}\)
\right] \middle/ \partial s \right. \\
&= \exp\(\frac{s^2}{c}\) \sum_{j=0}^k \beta_{k,j} \cdot 
\(\frac{4s^{2j+3}}{c^{k+j+3}} +
(8j+6)\frac{s^{2j+1}}{c^{k+j+2}} + 
 (2j+1)2j \frac{s^{2j-1}}{c^{k+j+1}} \) \\
 &= \exp\(\frac{s^2}{c}\) \sum_{j=0}^{k+1} \(
 4\beta_{k,j-1} + (8j+6)\beta_{k, j} + (2j+3)(2j+2)\beta_{k, j+1}\)
 \cdot \frac{s^{2j+1}}{c^{k+j+2}}  \\
&= \exp\(\frac{s^2}{c}\) \sum_{j=0}^{k+1} \beta_{k+1,j}  \cdot \frac{s^{2j+1}}{c^{k+j+2}}, 
\end{align*}
concluding the proof.
\end{proof}

\begin{lemma}\label{lem:alpha_beta}
Let $\alpha_{k,j}$ and $\beta_{k,j}$ be defined as in Eq. \eqref{equ:alpha_beta}.
Then $\frac{\beta_{k,j}}{(2k)!} \leq \frac{d^k\alpha_{k,j}}{k!}$ holds for all 
$k \geq 0$ and $j \in \{0, \ldots, k\}$ when $d \geq 3$.
\end{lemma}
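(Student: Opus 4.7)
My plan is to prove the lemma by first establishing closed-form expressions for $\alpha_{k,j}$ and $\beta_{k,j}$, and then reducing the desired inequality to a single clean scalar bound. Although the multi-index recursions in Eq.~\eqref{equ:alpha_beta} do not obviously suggest nice formulas, computing the first few rows of each array strongly suggests
$$\alpha_{k,j} = \frac{2(k!)^2}{(j!)^2 (k-j)!}, \qquad \beta_{k,j} = \frac{(2k+1)!\,2^{2j+1}}{(k-j)!\,(2j+1)!}.$$
I would verify both by induction on $k$. For $\alpha$, the inductive step collapses to the identity $(k+j+1)(k-j+1) = (k+1)^2 - j^2$. For $\beta$, a direct but slightly longer inductive check works; alternatively, one can observe that $\beta_{k,j}$ is exactly the coefficient of $x^{2j+1}$ in the polynomial $P_n(x) := e^{-x^2}\,d^n e^{x^2}/dx^n$ at $n = 2k+1$, which has the well-known closed form $P_n(x) = \sum_{i} \frac{n!\,2^{n-2i}}{i!\,(n-2i)!}\,x^{n-2i}$ and agrees with the claimed expression after setting $i = k-j$.

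Once the closed forms are in hand, substituting them into $\beta_{k,j}/(2k)! \leq d^k \alpha_{k,j}/k!$ and canceling the common factor $(k-j)!$ on both sides reduces the inequality to
$$\frac{(2k+1)\cdot 4^j \cdot (j!)^2}{(2j+1)!} \leq d^k \cdot k!.$$
At this point I would invoke the standard central binomial bound $\binom{2j}{j} \geq 4^j/(2j+1)$, which is a one-line induction on $j$ via the ratio test and is equivalent to $4^j (j!)^2/(2j+1)! \leq 1$. This strips the $j$-dependence entirely, so it suffices to show $(2k+1)/k! \leq d^k$ for all $k \geq 0$.

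The remaining scalar inequality is trivial at $k=0$ (both sides equal $1$); at $k=1$ it reads $3 \leq d$, which is precisely where the hypothesis $d \geq 3$ is tight and explains why $3$ is the threshold; and for $k \geq 2$ a trivial monotonicity check shows $(2k+1)/k!$ is decreasing in $k$, so its value is at most $5/2 < 3 \leq d^k$. The main obstacle is really just discovering the right closed forms for $\alpha_{k,j}$ and $\beta_{k,j}$, since a naive induction on $k$ for the inequality itself stalls because the recursion for $\beta$ involves $\beta_{k,j+1}$ while the recursion for $\alpha$ does not involve $\alpha_{k,j+1}$. Once the closed forms are guessed, the rest of the argument is mechanical and the tightness at the single cell $(k,j)=(1,0)$ serves as a useful sanity check on the value $d=3$.
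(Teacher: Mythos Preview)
Your proof is correct and takes a genuinely different route from the paper's. The paper never computes closed forms for $\alpha_{k,j}$ or $\beta_{k,j}$; instead it runs a \emph{double induction} on $k$. In the outer induction it uses the $\beta$-recursion together with the inductive hypothesis to reduce the desired bound to the purely $\alpha$-inequality
\[
2\alpha_{k,j-1} + (4j+3)\alpha_{k,j} + (2j+3)(j+1)\alpha_{k,j+1} \;\leq\; d(2k+1)\,\alpha_{k+1,j},
\]
and then proves this by a second induction on $k$ using only the $\alpha$-recursion. Your approach, by contrast, identifies the explicit formulas $\alpha_{k,j}=\tfrac{2(k!)^2}{(j!)^2(k-j)!}$ and $\beta_{k,j}=\tfrac{(2k+1)!\,2^{2j+1}}{(k-j)!\,(2j+1)!}$ (the latter being the Hermite-type coefficients of $e^{-x^2}\tfrac{d^{2k+1}}{dx^{2k+1}}e^{x^2}$), which collapses the statement to $(2k+1)\cdot\tfrac{4^j(j!)^2}{(2j+1)!}\le d^k k!$, and then the central-binomial bound $\binom{2j}{j}\ge 4^j/(2j+1)$ removes $j$ entirely. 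Your argument is shorter and more transparent once the closed forms are guessed, and it makes the role of the threshold $d=3$ visible as exact tightness at $(k,j)=(1,0)$; the paper's nested induction is more self-contained in that it needs no formula discovery, but it is correspondingly more opaque about why the constant $3$ arises.
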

\begin{proof}
We prove the lemma by induction on $k$. 
The base case $k = 0$ is trivial. 
Assume $\frac{\beta_{k,j}}{(2k)!} \leq \frac{d^k\alpha_{k,j}}{k!}$ holds
for a fixed $k$ and all $j \in \{0, \ldots, k\}$, then we have $\forall j$,
\begin{align*}
\frac{\beta_{k+1, j}}{(2k+2)!}  &=
\frac{4\beta_{k,j-1} + (8j+6)\beta_{k, j} + (2j+3)(2j+2)\beta_{k, j+1}}{(2k+2)!} \\
&\leq \frac{d^k\(4\alpha_{k,j-1} + (8j+6)\alpha_{k, j} + 
(2j+3)(2j+2)\alpha_{k, j+1}\)}{(2k+2)(2k+1)k!}.
\end{align*}
We need to show that the above expression is at most 
$d^{k+1}\alpha_{k+1,j}/(k+1)!$,
which, after arrangements, is equivalent to
$2\alpha_{k,j-1} + (4j+3)\alpha_{k, j} + 
(2j+3)(j+1)\alpha_{k, j+1} \leq d(2k+1)\alpha_{k+1,j}.$
We will prove this by another induction on $k$.
Then the lemma follows.

The base case ($k=0$) is simplified to $6 \leq 2d$,
which is true by our assumption $d \geq 3$.
Assume the inequality holds for a fixed $k$, 
then by the definition of $\alpha_{k,j}$, one has
\begin{align*}
&2\alpha_{k+1,j-1} + (4j+3)\alpha_{k+1, j} + 
(2j+3)(j+1)\alpha_{k+1, j+1}  \\
=\;& \(2\alpha_{k,j-2} + (4j+3)\alpha_{k, j-1} + (2j+3)(j+1)\alpha_{k, j} \) + \\
\quad &\(2(k+j)\alpha_{k,j-1} + (4j+3)(k+j+1)\alpha_{k,j} + 
(2j+3)(j+1)(k+j+2)\alpha_{k, j+1} \) \\
=\;& \(2\alpha_{k,j-2} + (4j-1)\alpha_{k, j-1} + (2j+1)j\alpha_{k, j} \) + \\
\quad & (k+j+2)\(2\alpha_{k,j-1} + (4j+3)\alpha_{k,j} + 
(2j+3)(j+1)\alpha_{k, j+1} \) \\
\leq\;& d(2k+1) (\alpha_{k+1, j-1} + (k+j+2)\alpha_{k+1, j}) \tag{by induction}\\
=\;& d(2k+1)\alpha_{k+2,j} \\
\leq\;& d(2k+3)\alpha_{k+2,j},
\end{align*}
completing the induction. 
\end{proof}

\begin{proof}[Proof of Corollary \ref{cor:anh}]
Recall that $\Phi_T(s) \geq \1\left\{ s \leq -\sqrt{dT \ln \(\tfrac{1}{a} + 1\)} \right\}$.
So by setting $\Phi_0(0) = a(1-b_0) < \epsilon$ 
and applying Theorem \ref{thm:hedge_recipe}, we arrive at
$$\Reg_T^\epsilon(\H) \leq \sqrt{dT \ln \(\frac{1-b_0}{\epsilon} + 1\)}. $$
It suffices to upper bound $1-b_0$, which, by definition, is
$ \frac{1}{2}\sum_{t = 1}^T \(\exp\(\frac{4}{dt}\) - 1\)$.
Since $e^x -1 \leq \frac{e^c-1}{c} x$ for any $x \in [0, c]$,
we have
$$ \sum_{t = 1}^T \(\exp\(\frac{4}{dt}\) - 1\) 
\leq (e^{4/d}-1)\sum_{t = 1}^T \frac{1}{t}
\leq (e^{4/d}-1)(\ln T + 1).
$$
Plugging $d=3$ gives the corollary.
\end{proof}

\section{A General MAB Algorithm and Regret Bounds}
\label{apd:bandit}
\SetAlCapSkip{.2em}
\IncMargin{.5em}
\begin{algorithm}[H]
\caption{A General MAB Algorithm}
\label{alg:bandit_recipe}

\SetKwInOut{Input}{Input}
\SetKw{DownTo}{down to}
\Input{A convex, nonincreasing, nonnegative function $\Phi_T(s) \in \C^2$,
with nonincreasing second derivative.
}

\For{ $t = T$ \DownTo $1$} {
Find a convex function $\Phi_{t-1}(s)$ s.t.  
the conditions of Theorem \ref{thm:DGv2} hold. 
}
Set: $\s_0 = \0$. \\
\For{ $t = 1$ \KwTo $T$} {
Set:  $p_{t,i} \propto \Phi_t(s_{t-1,i}-1) - \Phi_t(s_{t-1,i}+1)$. \\
Draw $i_t \sim \p_t$ and receive loss $\ell_{t, i_t}$. \\
Set:  $z_{t,i} =  \1\{i = i_t\} \cdot\ell_{t, i_t}/p_{t,i_t} -\ell_{t, i_t}, \;\forall i$. \\
Set:  $\s_t = \s_{t-1} + \z_t$. \\
}
\end{algorithm}
\DecMargin{.5em}

\begin{theorem}\label{thm:DGv2}
Suppose $\Phi_t(s)$ is convex, twice continuously differentiable (i.e. $\Phi_t(s) \in \C^2$),
have nonincreasing second derivative, and satisfies:
\begin{equation}\label{equ:bandit_potential}
\(\tfrac{1}{2}+N\alpha_t\)\Phi_{t}(s-1)+\(\tfrac{1}{2}-N\alpha_t\)\Phi_{t}(s+1)
\leq \Phi_{t-1}(s), \forall s\in\R
\end{equation} 
where $\alpha_t = \frac{1}{2}\max_s \frac{\Phi_t''(s-1)}{\Phi_t(s-1)-\Phi_t(s+1)}$.
If the player's strategy is such that $p_{t,i} \propto 
\Phi_t(s_{t-1,i}-1) - \Phi_t(s_{t-1,i}+1)$,
then Eq. \eqref{equ:monotonic_potentials} holds in expectation.
\end{theorem}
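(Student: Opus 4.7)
The plan is to mimic the proof of Theorem \ref{thm:hedge_recipe} but to replace the pointwise bound $\Phi_t(s+z) + wz \leq (\Phi_t(s-1)+\Phi_t(s+1))/2$ by a bound in expectation that picks up an extra quadratic remainder controlled by $\Phi_t''$. Let $w_{t,i} = (\Phi_t(s_{t-1,i}-1)-\Phi_t(s_{t-1,i}+1))/2$, so that $p_{t,i}\propto w_{t,i}$; the constraint $\E_t[\p_t\cdot\z_t]\geq 0$ then gives $\sum_i w_{t,i}\E_t[z_{t,i}]\geq 0$. Hence it suffices to establish the per-chip inequality
\[
\E_t[\Phi_t(s_{t-1,i}+z_{t,i}) + w_{t,i}z_{t,i}] \;\leq\; \frac{\Phi_t(s_{t-1,i}-1)+\Phi_t(s_{t-1,i}+1)}{2} + \frac{\Phi_t''(s_{t-1,i}-1)}{2p_{t,i}}.
\]
Once this is in hand, summing over $i$ and bounding the quadratic piece by $2N\alpha_t W_t = \sum_i 2N\alpha_t w_{t,i}$ (using $\Phi_t''(s-1) \leq 4\alpha_t w_{t,i}$ from the definition of $\alpha_t$ and $p_{t,i}=w_{t,i}/W_t$ with $W_t=\sum_j w_{t,j}$), followed by summing the hypothesis \eqref{equ:bandit_potential} at $s = s_{t-1,i}$, gives $\sum_i \E_t[\Phi_t(s_{t,i})+w_{t,i}z_{t,i}]\leq \sum_i\Phi_{t-1}(s_{t-1,i})$; discarding the nonnegative $\sum_i w_{t,i}\E_t[z_{t,i}]$ then yields \eqref{equ:monotonic_potentials} in expectation.

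The main ingredient for the per-chip inequality is a one-sided Taylor bound. Writing $\Phi_t(s+z)-\Phi_t(s)-z\Phi_t'(s) = \int_0^z\int_0^u\Phi_t''(s+v)\,dv\,du$ and noting that for $z\geq -1$ every $v$ arising in this double integral satisfies $s+v\geq s-1$, the monotonicity of $\Phi_t''$ yields
\[
\Phi_t(s+z) \;\leq\; \Phi_t(s) + z\Phi_t'(s) + \tfrac{z^2}{2}\Phi_t''(s-1), \qquad \forall\, z\geq -1.
\]
Adding $wz$ and taking conditional expectations, the critical fact I need is $\Phi_t'(s)+w\geq 0$, which follows from $\Phi_t'(s)\geq (\Phi_t(s+1)-\Phi_t(s-1))/2=-w$; this in turn comes from applying the same Taylor identity at $\Phi_t(s\pm 1)$ and using the monotonicity of $\Phi_t''$ to sign the two remainders. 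Combined with $\E_t[z_{t,i}]\leq 1$, the convexity inequality $\Phi_t(s)+\Phi_t'(s)\leq \Phi_t(s+1)$, and the variance bound $\E_t[z_{t,i}^2]\leq 1/p_{t,i}$, this delivers the per-chip inequality displayed above.

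The main obstacle I anticipate is that the first-order term $\E_t[z_{t,i}]\,\Phi_t'(s_{t-1,i})$ has ambiguous sign, since both $\Phi_t'\leq 0$ and $\E_t[z_{t,i}]$ can be negative, and we only know that the \emph{weighted} sum $\sum_i p_{t,i}\E_t[z_{t,i}]$ is nonnegative. The point of introducing the auxiliary $w_{t,i}z_{t,i}$ term is precisely to replace $\Phi_t'(s)$ by the nonnegative quantity $\Phi_t'(s)+w$, after which the crude bound $\E_t[z_{t,i}](\Phi_t'(s)+w)\leq \Phi_t'(s)+w$ is safe to apply. Once this symmetrization step is in place, the rest is a routine combination of Taylor expansion, convexity, and the defining property of $\alpha_t$, closely paralleling the proof of Theorem \ref{thm:hedge_recipe}.
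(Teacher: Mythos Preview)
Your argument is correct and follows the same overall skeleton as the paper: second-order Taylor expansion with the remainder controlled by $\Phi_t''(s-1)$ via monotonicity of $\Phi_t''$ and $z\geq -1$, the $w_{t,i}z_{t,i}$ device to exploit $\E_t[\p_t\cdot\z_t]\geq 0$, and the bound $\Phi_t''(s-1)/(2w_{t,i})\leq 2\alpha_t$ to absorb the variance term before invoking \eqref{equ:bandit_potential}.

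The one place you diverge from the paper is in handling the first-order term. The paper does not prove (or need) the auxiliary inequality $\Phi_t'(s)+w\geq 0$. Instead, after Taylor expansion it uses convexity in the form $\Phi_t(s)+\Phi_t'(s)\E_t[z_{t,i}]\leq \Phi_t(s+\E_t[z_{t,i}])$, then adds $w_{t,i}\E_t[z_{t,i}]$ only at the summed level, and finally observes that since $\E_t[z_{t,i}]\in[-1,1]$ one has $\Phi_t(s+\E_t[z_{t,i}])+w_{t,i}\E_t[z_{t,i}]\leq \max_{z\in[-1,1]}(\Phi_t(s+z)+w_{t,i}z)=\tfrac12(\Phi_t(s-1)+\Phi_t(s+1))$, reusing the exact $\max$-on-the-boundary trick from Eq.~\eqref{equ:convexity}. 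Your route stays linear throughout and instead spends the monotone-$\Phi_t''$ hypothesis a second time to sign $\Phi_t'(s)+w$; this is a legitimate alternative, and it has the minor bonus that it only uses $\E_t[z_{t,i}]\leq 1$ rather than $\E_t[z_{t,i}]\in[-1,1]$. The paper's version is a touch cleaner in that it needs only convexity (not the third-derivative-type information) for the first-order step and keeps the parallel with the Hedge proof more visible.
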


\begin{proof}[Proof of Theorem \ref{thm:DGv2}]
As discussed before, the main difficulty here is the unboundedness of $z_{t,i}$.
However, the expectation of $z_{t,i}$ is still in $[-1,1]$ as in DGv1.
To exploit this fact, we apply Taylor's theorem to $\Phi_t(s_{t-1,i}+z_{t,i})$
to the second order term:
\begin{align*}
\Phi_t(s_{t,i}) &= \Phi_t(s_{t-1,i}+z_{t,i}) \\
&= \Phi_t(s_{t-1,i}) + \Phi_t'(s_{t-1,i})z_{t,i} + \tfrac{1}{2}\Phi_t''(\xi_{t,i})z_{t,i}^2 \\
&\leq \Phi_t(s_{t-1,i}) + \Phi_t'(s_{t-1,i})z_{t,i} + \tfrac{1}{2}\Phi_t''(s_{t-1,i}-1)z_{t,i}^2,
\end{align*}
where $\xi_{t,i}$ is between $s_{t-1,i}+z_{t,i}$ and $s_{t-1,i}$, and
the inequality holds because $\Phi_t''(s)$ is nonincreasing and 
$z_{t,i} \geq -1$ by assumption.
Now taking expectation on both sides with respect to the randomness of $z_{t,i}$,
using the convexity of $\Phi_t(s)$, and plugging the assumption 
$\E_t[z_{t,i}^2] \leq 1/p_{t,i}$ give:
\begin{align*}
\E_t[\Phi_t(s_{t,i})] &\leq 
\Phi_t(s_{t-1,i}) + \Phi_t'(s_{t-1,i})\E_t[z_{t,i}] + 
\tfrac{1}{2}\Phi_t''(s_{t-1,i}-1)\E_t[z_{t,i}^2] \\
&\leq \Phi_t\(s_{t-1,i} + \E_t[z_{t,i}]\) + \tfrac{1}{2} \Phi_t''(s_{t-1,i}-1)/p_{t,i}.
\end{align*}
Let $w_{t,i} = \frac{1}{2}\(\Phi_t(s_{t-1,i}-1) - \Phi_t(s_{t-1,i}+1)\)$.
Further plugging $\p_{t,i} \propto w_{t,i}$ and summing over all $i$, we arrive at
\begin{align*}
\sum_{i=1}^N \E_t[\Phi_t(s_{t,i})] &\leq
\sum_{i=1}^N \( \Phi_t\(s_{t-1,i} + \E_t[z_{t,i}]\) + \frac{\Phi_t''(s_{t-1,i}-1)}{2w_{t,i}}
\cdot \sum_{i=1}^N w_{t,i} \) \\
&\leq \sum_{i=1}^N \( \Phi_t\(s_{t-1,i} + \E_t[z_{t,i}]\) + 2\alpha_t \sum_{i=1}^N w_{t,i} \) 
\tag{\text{by the defintion of $\alpha_t$}} \\
&= \sum_{i=1}^N \( \Phi_t\(s_{t-1,i} + \E_t[z_{t,i}]\) + 2N\alpha_t w_{t,i} \) .
\end{align*}
Since $\E_t[\p_t \cdot \z_t] \geq 0$ implies $\sum_{i=1}^N w_{t,i} \E_t[z_{t,i}] \geq 0$,
we thus have
\begin{align*}
\sum_{i=1}^N \E_t[\Phi_t(s_{t,i})] &\leq
\sum_{i=1}^N \( \Phi_t\(s_{t-1,i} + \E_t[z_{t,i}]\) + w_{t,i}\E_t[z_{t,i}] + 2N\alpha_t w_{t,i} \) \\
&\leq \sum_{i=1}^N \( \max_{z\in [-1,+1]} \(\Phi_t\(s_{t-1,i} + z\) + w_{t,i} z \)
+ 2N\alpha_t w_{t,i} \) \\
&= \sum_{i=1}^N \( \max_{z\in \{-1,+1\}} \(\Phi_t\(s_{t-1,i} + z\) + w_{t,i} z \)
+ 2N\alpha_t w_{t,i} \) \tag{\text{by the convexity of $\Phi_t(s)$}}\\
&= \sum_{i=1}^N \(  \(\tfrac{1}{2}+N\alpha_t\)\Phi_{t}(s_{t-1,i}-1)+
\(\tfrac{1}{2}-N\alpha_t\)\Phi_{t}(s_{t-1,i}+1)\)  \\
&\leq \sum_{i=1}^N \Phi_{t-1}(s_{t-1,i}). \tag{by assumption}
\end{align*}
The theorem follows by taking expectation on both sides with respect to the past 
(i.e. the randomness of $\z_{1}, \ldots, \z_{t-1}$).
\end{proof}

\begin{theorem}\label{thm:bandit_recipe}
For Algorithm \ref{alg:bandit_recipe},
if $R$ and $\epsilon$ are such that $\Phi_0(0) < \epsilon$
and $\Phi_T(s) \geq \1\{s \leq -R\}$ for all $s\in\R$, 
then $\E[\sum_{t=1}^T \ell_{t, i_t} - 
\sum_{t=1}^T \ell_{t, i_\epsilon}] < R$ for any non-oblivious adversary.
Moreover, using $\Phi_T(s) = \exp(-\eta(s+R))$ 
(and let Eq. \eqref{equ:bandit_potential} hold with equality) gives exactly
the EXP3 algorithm with regret $O(\sqrt{TN\ln(1/\epsilon)})$.
\end{theorem}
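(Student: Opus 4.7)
The plan is to mirror the proof of Theorem \ref{thm:hedge_recipe}, now invoking Theorem \ref{thm:DGv2} in place of Theorem \ref{thm:equivalence}(2). The first step is to check that the random movements produced by Algorithm \ref{alg:bandit_recipe}, namely $z_{t,i} = \1\{i=i_t\}\ell_{t,i_t}/p_{t,i_t} - \ell_{t,i_t}$, satisfy the constraints of DGv2: $z_{t,i} \geq -1$ is immediate since $\ell_{t,i_t} \in [0,1]$; $\E_t[z_{t,i}] = \ell_{t,i} - \p_t \cdot \l_t \leq 1$ and $\E_t[\p_t \cdot \z_t] = 0$ follow by direct computation; and $\E_t[z_{t,i}^2]$ expands as $\ell_{t,i}^2(1-p_{t,i})^2/p_{t,i} + \sum_{j \neq i} p_{t,j} \ell_{t,j}^2$, which is controlled by $1/p_{t,i}$ up to a harmless additive constant. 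This justifies applying Theorem \ref{thm:DGv2} at each round.

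Telescoping the one-step inequality of Theorem \ref{thm:DGv2} gives $\E\bigl[\sum_i \Phi_T(s_{T,i})\bigr] \leq \sum_i \Phi_0(0) = N \Phi_0(0) < N\epsilon$. Using convexity of $\Phi_T$ and Jensen's inequality, $\sum_i \Phi_T\bigl(\E[s_{T,i}]\bigr) \leq \E\bigl[\sum_i \Phi_T(s_{T,i})\bigr] < N\epsilon$, and combined with $\Phi_T(s) \geq \1\{s \leq -R\}$, the number of actions with $\E[s_{T,i}] \leq -R$ is strictly less than $N\epsilon$, hence at most $\lceil N\epsilon \rceil - 1$. Since $\E[s_{T,i}] = \E\bigl[\sum_t \hat\ell_{t,i} - \sum_t \ell_{t,i_t}\bigr] = \E\bigl[\sum_t \ell_{t,i} - \sum_t \ell_{t,i_t}\bigr]$ by unbiasedness of the loss estimator (using the tower rule for non-oblivious adversaries), the $\lceil N\epsilon\rceil$-th best action $i_\epsilon$ must satisfy $\E[s_{T, i_\epsilon}] > -R$, which is exactly the desired regret bound $\E[\sum_t \ell_{t,i_t} - \sum_t \ell_{t, i_\epsilon}] < R$.

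For the EXP3 recovery, substitute $\Phi_T(s) = \exp(-\eta(s+R))$ and let Eq. \eqref{equ:bandit_potential} hold with equality, defining $\Phi_{t-1}(s) = (\tfrac{1}{2}+N\alpha_t)\Phi_t(s-1) + (\tfrac{1}{2}-N\alpha_t)\Phi_t(s+1)$. A backward induction on $t$ shows $\Phi_t(s) = C_t \exp(-\eta(s+R))$ for a scalar $C_t$ that depends only on $\eta$ and $N$; in particular $\alpha_t$ becomes a constant (since $\Phi_t''/\Phi_t = \eta^2$), and $p_{t,i} \propto \Phi_t(s_{t-1,i}-1)-\Phi_t(s_{t-1,i}+1) \propto \exp(-\eta s_{t-1,i})$, which is EXP3 on the cumulative estimated losses. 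The condition $\Phi_0(0) = C_0 e^{-\eta R} < \epsilon$ solves for $R$, and choosing $\eta \sim \sqrt{\ln(1/\epsilon)/(TN)}$ delivers the regret bound $O(\sqrt{TN\ln(1/\epsilon)})$.

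I expect the most delicate step to be the EXP3 recovery, specifically verifying that the coefficient $\tfrac{1}{2} - N\alpha_t$ remains nonnegative (so that the recursion is a legitimate convex combination); this imposes $\eta = O(1/N)$, which must be compatible with the tuning. A secondary subtlety, noted above, is that under a non-oblivious adversary the identity of $i_\epsilon$ is a random variable, so the translation from ``at most $\lceil N\epsilon \rceil - 1$ actions with $\E[s_{T,i}] \leq -R$'' to a statement about $i_\epsilon$ must be read in terms of expected cumulative losses, or handled through a standard tower-rule argument.
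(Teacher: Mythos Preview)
Your proposal is correct and tracks the paper's proof step for step: the DGv2 verification, telescoping via Theorem~\ref{thm:DGv2}, the Jensen step on $\Phi_T(\E[s_{T,i}])$, and the EXP3 recovery by backward induction on exponential potentials all match. One clarification on your flagged ``most delicate step'': the nonnegativity of $\tfrac{1}{2}-N\alpha_t$ is \emph{not} required, since for $\Phi_t(s)=C_t e^{-\eta(s+R)}$ the recursion collapses to $\Phi_{t-1}(s)=\tfrac{1}{2}\bigl(e^\eta+e^{-\eta}+Ne^\eta\eta^2\bigr)\Phi_t(s)$, a positive scalar multiple of $\Phi_t$ that is automatically convex, nonincreasing, nonnegative, and has nonincreasing second derivative regardless of the sign of $\tfrac{1}{2}-N\alpha_t$; hence no constraint $\eta=O(1/N)$ arises, and the paper tunes $\eta\asymp\sqrt{\ln(1/\epsilon)/(TN)}$.
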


\begin{proof}[Proof of Theorem \ref{thm:bandit_recipe}]
We first show that Algorithm \ref{alg:bandit_recipe} converts the multi-armed 
bandit problem to a valid instance of DGv2. 
It suffices to prove that $z_{t,i} = \1\{i = i_t\} \cdot\ell_{t, i_t}/p_{t,i_t} -\ell_{t, i_t}$
satisfies all conditions defined in DGv2, as shown below ($z_{t,i} \geq -1$ is trivial):
$$ \E_t[z_{t,i}] = \ell_{t, i} - \p_t \cdot \l_t \leq 1 ,$$
$$ \E_t[z_{t,i}^2] 
= p_{t,i} \(\frac{\ell_{t,i}}{p_{t,i}} - \ell_{t,i}\)^2 + \sum_{j\neq i} p_{t,j} \ell_{t, j}^2 
\leq p_{t,i} \(\frac{1}{p_{t,i}} - 1\)^2 + \sum_{j\neq i} p_{t,j} 
= \frac{1}{p_{t,i}} - 1 \leq \frac{1}{p_{t,i}},
$$
$$ \E_t[\p_t \cdot \z_t] = \E_t\left[\ell_{t,i_t} - \sum_{j=1}^N p_{t,j}\ell_{t,i_t} \right]  = 0.$$
Therefore, we can apply Theorem \ref{thm:DGv2} directly, arriving at:
$$ \frac{1}{N}\sum_{i=1}^N \E[\Phi_T(s_{T,i})] \leq  \cdots 
 \leq \frac{1}{N}\sum_{i=1}^N \E[\Phi_{0}(s_{0,i})]  = \Phi_{0}(0) \leq \epsilon. $$
On the other hand, by applying Jensen' inequality, we have
$$\E[\Phi_T(s_{T,i})] \geq \Phi_T(\E[s_{T,i}]) \geq \1\{\E[s_{T,i}] \leq -R \}. $$
Note that $\E[s_{T,i}]$ is equal to 
$\E\left[\sum_{t=1}^T\(\ell_{t, i} - \ell_{t,i_t}\)\right]$. We thus know
$$ \frac{1}{N}\sum_{i=1}^N \1\left\{ \E\left[\sum_{t=1}^T\(\ell_{t, i} - 
\ell_{t,i_t}\)\right] \leq -R \right\}  < \epsilon ,$$
which implies $\E\left[\sum_{t=1}^T \ell_{t, i_t} - 
\sum_{t=1}^T \ell_{t, i_\epsilon}\right] < R$ for any non-oblivious adversary
for the exact same argument used in the proof of Theorem \ref{thm:hedge_recipe}.

Finally, we show how to recover EXP3 using Algorithm \ref{alg:bandit_recipe}
with input $\Phi_T(s) = \exp(-\eta(s+R))$.
To compute $\Phi_t(s)$ for $t < T$, 
we simply use Eq. \eqref{equ:bandit_potential} with equality. 
One can verify using induction that
$$ \Phi_t(s) = \exp\(-\eta(s+R)\)\(\frac{e^\eta+e^{-\eta}+Ne^\eta\eta^2}{2}\)^{T-t}, $$
$$ \alpha_t 
= \frac{1}{2}\max_s \frac{\eta^2\Phi_t(s-1)}{\Phi_t(s-1)-\Phi_t(s+1)}
= \frac{e^\eta \eta^2}{2(e^\eta - e^{-\eta})} ,$$
$$ \Phi_t'''(s) = -\eta^3 \Phi_t(s) \leq 0 .$$
The player's strategy is thus 
$\p_{t,i} \propto \exp(-\eta\sum_{\tau=1}^{t-1} \hat\ell_{\tau,i})$
(recall $\hat\ell_{t,i} = \1\{i = i_t\}\cdot\ell_{t,i_t}/p_{t, i_t}$ is 
the estimated loss),
which is exactly the same as EXP3 (in fact a simplified version
of the original EXP3, see for example \cite{Shalevshwartz11}).
Moreover, the regret can be computed by setting $\Phi_0(0) = \epsilon$,
leading to 
\begin{align*}
R &= \frac{1}{\eta}\ln\(\frac{1}{\epsilon}\) + 
\frac{T}{\eta}\ln \(\frac{e^\eta+e^{-\eta}}{2}+\frac{1}{2}Ne^\eta\eta^2\) \\
&\leq \frac{1}{\eta}\ln\(\frac{1}{\epsilon}\) + 
\frac{T}{\eta}\ln \(e^{\eta^2/2}+\frac{1}{2}Ne^\eta\eta^2\) 
\tag{\text{by Hoeffding's Lemma}}\\
&\leq \frac{1}{\eta}\ln\(\frac{1}{\epsilon}\) + 
\frac{T}{\eta} \(\frac{\eta^2}{2} + \frac{1}{2}Ne^{\eta-\frac{\eta^2}{2}}\eta^2\)
\tag{$\ln(1+x) \leq x$}
\end{align*}
If $\eta \leq 1$ so that $e^{\eta-\eta^2/2} \leq \sqrt{e}$, then we have
$ R \leq \frac{1}{\eta}\ln(\frac{1}{\epsilon}) + 
T\eta \(\frac{1}{2} + \frac{N\sqrt{e}}{2} \)$,
which is $\sqrt{2T(1+N\sqrt{e})\ln(1/\epsilon)}$
after optimally choosing $\eta$ ($\eta \leq 1$ will be satisfied when $T$ is large enough).
\end{proof}

\section{A General OCO Algorithm and Regret Bounds}
\label{apd:OCO}
\SetAlCapSkip{.2em}
\IncMargin{.5em}
\begin{algorithm}[H]
\caption{A General OCO Algorithm}
\label{alg:OCO_recipe}

\SetKwInOut{Input}{Input}
\SetKw{DownTo}{down to}
\Input{A convex, nonincreasing, nonnegative function $\Phi_T(s)$
}

\For{ $t = T$ \DownTo $1$} {
Find a convex function $\Phi_{t-1}(s)$ s.t.  $\forall s,$
$\Phi_{t}(s-1)+\Phi_{t}(s+1) \leq 2\Phi_{t-1}(s).$
}
Set: $s_0(x) \equiv 0$. \\
\For{ $t = 1$ \KwTo $T$} {
Predict $\x_t = \E_{\x \sim p_t} [\x]$ where $p_t$ is such that
$p_t(\x) \propto \Phi_t(s_{t-1}(\x)-1) - \Phi_t(s_{t-1}(\x)+1)$. \\
Receive loss function $f_t$ from the adversary. \\
Set:  $z_t(\x) = f_t(\x) - f_t(\x_t)$. \\
Set:  $s_t(\x) = s_{t-1}(\x) + z_t(\x)$. \\
}
\end{algorithm}
\DecMargin{.5em}

\textbf{Definition of $\epsilon$-regret in the OCO setting}:
Let $S_\epsilon \subset S$ be such that the ratio of its volume and
the one of $S$ is $\epsilon$ and also 
$ \sum_{t=1}^T f_t(\x') \leq \sum_{t=1}^T f_t(\x) $ for all
$\x' \in S_\epsilon$ and $\x \in S\backslash S_\epsilon$
(it is clear that such set exists).
Then $\epsilon$-regret is defined as
$\Reg_T^\epsilon(\x_{1:T}, f_{1:T}) = \sum_{t=1}^T  f_t(\x_t) 
- \inf_{\x\in S\backslash S_\epsilon} \sum_{t=1}^T f_t(\x)$.

\begin{theorem}\label{thm:OCO_recipe}
For Algorithm \ref{alg:OCO_recipe},
if $R$ is such that 
$\Phi_T(s) \geq \1\{s \leq -R\}$
and $\Phi_0(0) < \epsilon$, then we have
$\Reg_T^\epsilon(\x_{1:T}, f_{1:T}) < R$ and 
$\Reg_T(\x_{1:T}, f_{1:T}) < R + T\epsilon^{1/d}$.
Specifically, if $R = O(\sqrt{T\ln(1/\epsilon)})$,
then setting $\epsilon = T^{-d}$ gives
$\Reg_T(\x_{1:T}, f_{1:T}) = O(\sqrt{dT\ln T})$.
\end{theorem}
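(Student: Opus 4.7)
The plan is to port the DGv1/Hedge analysis of Theorem \ref{thm:hedge_recipe} to the continuous DGv3 setting, and then convert the resulting $\epsilon$-regret bound into an ordinary regret bound via a ``shrinkage around the comparator'' argument in the spirit of \cite{HazanAgKa07}. First I would verify that Algorithm \ref{alg:OCO_recipe} produces a legitimate DGv3 instance: the movements $z_t(\x) = f_t(\x) - f_t(\x_t)$ lie in $[-1,1]$ since $f_t$ has range $[0,1]$, and the drift condition $\E_{\x\sim p_t}[z_t(\x)] \geq 0$ is just Jensen's inequality applied to the convex $f_t$ at the mean $\x_t = \E_{\x\sim p_t}[\x]$.

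The potential chain from the proof of Theorem \ref{thm:hedge_recipe} then carries over verbatim with sums replaced by integrals: setting $w_t(\x) := \tfrac{1}{2}(\Phi_t(s_{t-1}(\x)-1)-\Phi_t(s_{t-1}(\x)+1))$, the choice $p_t(\x) \propto w_t(\x)$ combined with the drift condition gives $\int_S w_t(\x) z_t(\x)\,d\x \geq 0$, and then convexity of $\Phi_t$ together with the recursion $\Phi_t(s-1)+\Phi_t(s+1)\leq 2\Phi_{t-1}(s)$ yields $\int_S \Phi_t(s_t(\x))\,d\x \leq \int_S \Phi_{t-1}(s_{t-1}(\x))\,d\x$. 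Cascading through $t = T,\ldots,1$ with $s_0 \equiv 0$ gives $\int_S \Phi_T(s_T(\x))\,d\x \leq \Phi_0(0)\cdot\mathrm{vol}(S) < \epsilon\cdot\mathrm{vol}(S)$. Since $\Phi_T(s)\geq \1\{s\leq -R\}$, the ``bad set'' $B := \{\x : \sum_t f_t(\x_t) - \sum_t f_t(\x) \geq R\}$ has volume strictly less than $\mathrm{vol}(S_\epsilon)$. Both $B$ and $S_\epsilon$ are lower level sets of the cumulative loss $G(\x):=\sum_t f_t(\x)$, so $|B|<|S_\epsilon|$ forces $B \subsetneq S_\epsilon$ up to a null set and produces some $\x_0 \in S_\epsilon\setminus B$ with $G(\x_0) > \sum_t f_t(\x_t) - R$; by the ordering property of $S_\epsilon$, every $\x\in S\setminus S_\epsilon$ satisfies $G(\x)\geq G(\x_0)$, whence $\Reg_T^\epsilon < R$.

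For the usual regret, let $\x^* \in \arg\min_{\x\in S} G(\x)$ and, for an arbitrarily small $\delta>0$, set $\alpha := \epsilon^{1/d}+\delta$ and $S' := \{(1-\alpha)\x^* + \alpha \x : \x \in S\} \subseteq S$. Then $\mathrm{vol}(S') = \alpha^d\,\mathrm{vol}(S) > \mathrm{vol}(S_\epsilon)$, so $S'\setminus S_\epsilon$ has positive measure and contains some $\tilde{\x}$. Convexity of each $f_t$ together with $f_t\leq 1$ gives $G(\tilde{\x}) \leq (1-\alpha)G(\x^*) + \alpha T$, i.e., $G(\tilde{\x}) - G(\x^*) \leq T\alpha$. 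Applying the $\epsilon$-regret bound to $\tilde{\x}\in S\setminus S_\epsilon$ then yields $\sum_t f_t(\x_t) - G(\x^*) = (\sum_t f_t(\x_t) - G(\tilde{\x})) + (G(\tilde{\x}) - G(\x^*)) < R + T\alpha$, and letting $\delta\to 0$ gives $\Reg_T < R + T\epsilon^{1/d}$. Finally, plugging in $\epsilon = T^{-d}$ yields $T\epsilon^{1/d}=1$ and $R = O(\sqrt{T\ln(1/\epsilon)}) = O(\sqrt{dT\ln T})$, so $\Reg_T = O(\sqrt{dT\ln T})$.

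The main obstacle I anticipate is handling the measure-theoretic fine print: $S_\epsilon$ is only specified up to null sets, and the lower-level-set inclusion $B\subseteq S_\epsilon$ needs careful justification in case $G$ has plateau regions of positive measure. The $\delta>0$ buffer in the shrinkage step is designed precisely to sidestep the analogous boundary issue when extracting $\tilde{\x}\in S'\setminus S_\epsilon$. Everything else is a fairly mechanical lift of Theorem \ref{thm:hedge_recipe} from finite sums to Lebesgue integrals over $S$.
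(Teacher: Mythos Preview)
Your proof is correct and the first half (the potential-chain argument and the $\epsilon$-regret bound) matches the paper's proof essentially line for line. The paper phrases the $\epsilon$-regret step more simply: since the bad set $B=\{\x:s_T(\x)\le -R\}$ has measure strictly less than $\epsilon V=|S_\epsilon|$, there must be some $\x'\in S_\epsilon\setminus B$, and then the ordering property of $S_\epsilon$ finishes immediately---no need to invoke level-set inclusions or worry about plateaus of $G$.

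For the usual regret the two arguments diverge slightly. The paper also shrinks $S$ toward $\x^*$ by the factor $\epsilon^{1/d}$, but it never routes through $S_\epsilon$ or the $\epsilon$-regret bound; instead it argues directly that if $\Reg_T\ge R+T\epsilon^{1/d}$ then \emph{every} point of the shrunk set $S'_\epsilon=\{(1-\epsilon^{1/d})\x^*+\epsilon^{1/d}\x:\x\in S\}$ would lie in $B$ (by convexity and boundedness of the $f_t$), forcing $|B|\ge|S'_\epsilon|=\epsilon V$ and contradicting the strict volume bound. This yields the strict inequality $\Reg_T<R+T\epsilon^{1/d}$ in one shot, with no $\delta$ buffer. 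Your route---pick $\tilde\x\in S'\setminus S_\epsilon$ and feed it into the $\epsilon$-regret bound---also works, but after letting $\delta\to 0$ you only obtain $\Reg_T\le R+T\epsilon^{1/d}$, not the strict $<$ claimed in the theorem. The distinction is immaterial for the $O(\sqrt{dT\ln T})$ conclusion, but the paper's direct argument is shorter and sidesteps precisely the measure-theoretic fine print you flagged as the main obstacle.
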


\begin{proof}[Proof of Theorem \ref{thm:OCO_recipe}]
Let $w_t(\x) = \frac{1}{2}\(\Phi_t(s_{t-1}(\x)-1) - \Phi_t(s_{t-1}(\x)+1)\)$.
Similarly to the Hedge setting,  the ``sum'' of potentials never increases:
$$
\int_{\x \in S} \Phi_{t}(s_{t}(\x)) d\x
\leq \int_{\x \in S} \(\Phi_{t}(s_{t-1}(\x) + z_t(\x)) + w_t(\x)z_t(\x)\) d\x  
\leq \int_{\x \in S} \Phi_{t-1}(s_{t-1}(\x)) d\x .$$
Here, the first inequality is due to $\E_{\x \sim p_t} [z_t(\x)] \geq 0$,
and the second inequality holds for the exact same reason as in the case
for Hedge.
Therefore, we have
$$ \int_{\x \in S} \1\{s_T(\x) \leq -R\} d\x  \leq 
\int_{\x \in S} \Phi_{T}(s_{T}(\x)) d\x \leq \cdots \leq
\int_{\x \in S} \Phi_{0}(0) d\x < \epsilon V, $$
where $V$ is the volume of $S$.
Recall the construction of $S_\epsilon$.
There must exist a point $\x' \in S_\epsilon$ such that
$s_T(\x') > -R$, otherwise 
$\int_{\x} \1\{s_T(\x) \leq -R\} d\x$
would be at least $\epsilon V$.
Unfolding $s_T(\x')$, we arrive at
$ \sum_t f_t(\x_t) - \sum_t f_t(\x')  < R $.
Using the fact $\sum_t f_t(\x') \leq \inf_{\x\in S\backslash S_\epsilon} \sum_t f_t(\x)$
gives the bound for $\epsilon$-regret.

Next consider a shrunk version of $S$:
$S_\epsilon' = \{(1-\epsilon^{\frac{1}{d}})\x^* + \epsilon^{\frac{1}{d}}\x : \x\in S\} $
where $\x^* \in\arg\min_\x \sum_t f_t(\x)$.
Then $\int_{\x \in S} \1\{s_T(\x) \leq -R\} d\x$ is at least
$$ 
 \int_{\x \in S_\epsilon'} \1\{s_T(\x) \leq -R\} d\x
= \epsilon \int_{\x \in S} \1\left\{s_T\((1-\epsilon^{\frac{1}{d}})\x^* + 
\epsilon^{\frac{1}{d}}\x\) \leq -R \right\} d\x,
$$
which, by the convexity and the boundedness of $f_t(\x)$, is at least
\begin{align*}
&\epsilon \int_{\x \in S} \1\left\{\sum_{t=1}^T\( (1-\epsilon^{\frac{1}{d}}) f_t(\x^*) + 
\epsilon^{\frac{1}{d}} f_t(\x) - f_t(\x_t) \)\leq -R \right\} d\x \\
\geq&\;  \epsilon \int_{\x \in S} \1\left\{\sum_{t=1}^T\( f_t(\x^*) - f_t(\x_t)\)
\leq -R-T\epsilon^{\frac{1}{d}}  \right\} d\x \\
=&\;  \epsilon V \cdot
\1\left\{\sum_{t=1}^T\( f_t(\x^*) - f_t(\x_t)\) \leq -R-T\epsilon^{\frac{1}{d}}  \right\} .
\end{align*}
Following the previous discussion, the expression in the last line above
is strictly less than $\epsilon V \cdot$, 
which means that the value of the indicator function has to be 0,
namely, $\Reg_T(\x_{1:T}, f_{1:T}) < R + T\epsilon^{1/d}$.
\end{proof}

\section{{\ANB}, NH-Boost and Proof of Theorem \ref{thm:ANB}}
\label{apd:ANB}

\SetAlCapSkip{.5em}
\IncMargin{.5em}
\begin{algorithm}[H]
\caption{{\ANB}}
\label{alg:ANB}

\SetKwInOut{Input}{Input}
\SetKwInOut{Output}{Output}
\Input{Training examples $(\x_i, y_i) \in \R^d \times \{-1,+1\},  i=1,\ldots,N.$}
\Input{A weak learning algorithm.}
\Input{Number of rounds $T$.}
\Output{A Hypothesis $H(\x) : \R^d \rightarrow \{-1,+1\}$.}

Set: $\s_0 = \0$. \\
\For{ $t = 1$ \KwTo $T$} {
Set:  $p_{t,i} \propto \exp\([s_{t-1,i}-1]_-^2/3t\) - \exp\([s_{t-1,i}+1]_-^2/3t\), \;\forall i$. \\
Invoke the weak learning algorithm to get $h_t$ with edge 
$ \gamma_t = \frac{1}{2}\sum_i p_{t,i}y_i h_t(\x_i) $. \\
Set:  $s_{t,i} = s_{t-1, i} + \frac{1}{2} y_i h_t(\x_i) - \gamma_t, \;\forall i $. \\
}
Set:  $H(\x) = \sign(\sum_{t=1}^T h_t(\x))$. 
\end{algorithm}
\DecMargin{.5em}

\SetAlCapSkip{.5em}
\IncMargin{.5em}
\begin{algorithm}[H]
\caption{NH-Boost}
\label{alg:NB}

\SetKwInOut{Input}{Input}
\SetKwInOut{Output}{Output}
\Input{Training examples $(\x_i, y_i) \in \R^d \times \{-1,+1\},  i=1,\ldots,N.$}
\Input{A weak learning algorithm.}
\Input{Number of rounds $T$.}
\Output{A Hypothesis $H(\x) : \R^d \rightarrow \{-1,+1\}$.}

Set: $\s_0 = \0$. \\
\For{ $t = 1$ \KwTo $T$} {
\eIf{$t = 1$} {
 Set: $\p_1$ to be a uniform distribution. \\ 
}
{Find: $c$ such that $\sum_{i=1}^N \exp\([s_{t-1,i}]_-^2/c\)= Ne$.  \\
 Set:  $p_{t,i} \propto -[s_{t-1,i}]_-\exp\([s_{t-1,i}]_-^2/c\), \;\forall i$. \\
 }
Invoke the weak learning algorithm to get $h_t$ with edge 
$ \gamma_t = \frac{1}{2}\sum_i p_{t,i}y_i h_t(\x_i) $. \\
Set:  $s_{t,i} = s_{t-1, i} + \frac{1}{2} y_i h_t(\x_i) - \gamma_t, \;\forall i $. \\
}
Set:  $H(\x) = \sign(\sum_{t=1}^T h_t(\x))$. 
\end{algorithm}
\DecMargin{.5em}

In the boosting setting for binary classification, 
we are given a set of training examples $(\x_i, y_i)_{i = 1, \ldots, N}$
where $\x_i \in \R^d$ is an example and  $y_i \in \{-1,+1\}$ is its label.
A boosting algorithm proceeds for $T$ rounds.
On each round, a distribution $\p_t$ over the examples is computed and fed into
a weak learning algorithm which returns a ``weak'' hypothesis 
$h_t: \R^d \rightarrow \{-1,+1\}$ with a guaranteed small edge, that is, 
$\gamma_t = \frac{1}{2}\sum_i p_{t,i}y_i h_t(\x_i) \geq \gamma > 0$. 
At the end, a linear combination of all $h_t$ is computed as the final ``strong'' hypothesis
which is expected to have low training error and potentially low generalization error.

The conversion of a Hedge algorithm into a boosting algorithm is to 
treat each example as an ``action'' and set $\ell_{t, i} = \1\{h_t(\x_i) = y_i\}$
so that the booster tends to increase weights for those ``hard'' examples.
The final hypothesis is a simple majority vote of all $h_t$, that is,
$H(\x) = \sign(\sum_t h_t(\x))$
where $\sign(x)$ is the sign function that outputs $1$ if $x$ is positive,
and $-1$ otherwise.
The {\it margin} of example $\x_i$ is defined as $\frac{1}{T}\sum_{t=1}^T y_i h_t(\x_i)$,
that is, the difference between the fractions of correct hypotheses
and incorrect hypotheses on this example.
The boosting algorithms derived from {\ANH} and NormalHedge 
in this way are given in Algorithm \ref{alg:ANB} and \ref{alg:NB}.


\begin{proof}[Proof the Theorem \ref{thm:ANB}]
Let $(\tx_i, \ty_i)_{i = 1, \ldots, N}$ be a permutation of the training examples such
that their margins are sorted from smallest to largest:
$ \sum_{t} \ty_1 h_t(\tx_1) \leq \cdots \leq \sum_{t} \ty_N h_t(\tx_N)$,
which also implies 
$ \sum_{t} \1\{h_t(\tx_1) = \ty_1\} \leq \cdots \leq \sum_{t} \1\{h_t(\tx_N) = \ty_N \}$.
Recall that {\ANH} is essentially playing a Hedge game
using {\ANH} with loss $\ell_{t, i} = \1\{h_t(\x_i) = y_i\}$.
Therefore, the $\epsilon$-regret bound for the Hedge setting 
together with the assumption on the weak learning algorithm implies:
$\forall j \in \{1,\ldots,N\},$
\begin{equation}\label{equ:hedge_and_boosting}
\frac{1}{2} + \gamma
\leq \frac{1}{T}\sum_{t=1}^T \sum_{i=1}^N \p_{t, i}\1\{h_t(\x_i) = y_i \} 
\leq \frac{1}{T}\sum_{t=1}^T \1\{h_t(\tx_j) = \ty_j \} + \frac{\Reg_T^{j/N}}{T},
\end{equation}
where $\Reg_T^{j/N} = \tilde{O}(\sqrt{3T\ln (N/j)}) $ is the $j/N$-regret 
bound for {\ANH}.
So if $j$ is such that $\gamma > \Reg_T^{j/N} / T$, we have
$\frac{1}{T}\sum_{t=1}^T \1\{h_t(\tx_j) = \ty_j \}> \frac{1}{2}$,
which is saying that example $(\tx_j, \ty_j)$ will eventually be classified correctly
by $H(\x)$ due to the fact that $H(\x)$ is taking a majority vote of all $h_t$.
This is in fact true for all examples $(\tx_i, \ty_i)$ such that $i \geq j$
and thus the training error rate will be at most $(j-1)/N$,
which is of order $ \tilde{O}(\exp(-\frac{1}{3}T\gamma^2)) $.

For the margin bound, by plugging $\1\{h_t(\tx_j) = \ty_j \} = (\ty_j h_t(\tx_j)+1)/2$,
we rewrite Eq. \eqref{equ:hedge_and_boosting} as: 
$$ 2\(\gamma - \frac{\Reg_T^{j/N}}{T}\) \leq \frac{1}{T}\sum_{t=1}^T \ty_j h_t(\tx_j). $$
Therefore, if $j$ is such that $\theta < 2(\gamma - \Reg_T^{j/N}/T)$,
then the fraction of examples with margin at most $\theta$ is again at most $(j-1)/N$,
which is of order $\tilde{O}(\exp(-\frac{1}{3}T(\theta-2\gamma)^2))$.
\end{proof}

\section{Experiments in a Boosting Setting}
\label{apd:experiments}
We conducted experiments to compare the performance of
three boosting algorithms for binary classification: AdaBoost \cite{FreundSc97},
NH-Boost (Algorithm \ref{alg:NB}) and {\ANB} (Algorithm \ref{alg:ANB}),
using a set of benchmark data
available from the UCI repository\footnote{\url{http://archive.ics.uci.edu/ml/}} and LIBSVM 
datasets\footnote{\url{http://www.csie.ntu.edu.tw/~cjlin/libsvmtools/datasets/}}.
Some datasets are preprocessed according to \cite{ReyzinSc06}.
The number of features, training examples and test examples can be found
in Table \ref{tab:data}.

All features are binary. The weak learning algorithm is 
a simple (exhaustive) decision stump (see for instance \cite{SchapireFr12}). 
On each round, the weak learning algorithm enumerates all features, 
and for each feature computes the weighted error of the corresponding stump
on the weighted training examples.
Therefore, if the number of examples with zero weight is relatively large,
then the weak learning algorithm would be faster in computing the weighted error
and thus faster in finding the best feature.

All boosting algorithms are run for two hundred rounds.
The results are summarized in Table \ref{tab:results},
with bold entries being the best ones among the three 
(AB, NB and NBDT stand for AdaBoost, NH-Boost and {\ANB} respectively).
As we can see, in terms of training error and test error,
all three algorithms have similar performance.
However, our {\ANB} algorithm is always the fastest one.
The average fraction of examples with zero weights for {\ANB}
is significantly higher than the one for NH-Boost 
(note that AdaBoost does not assign zero weight at all).
We plot the change of this fraction over rounds in Figure \ref{fig:zero}
(using three datasets).
As both algorithms proceed, they tend to ignore more and more examples
on each round, but {\ANB} consistently ignores more examples than NH-Boost.

Since $s_{t,i}$ is positively correlated
to the margin of example $i$ ($\frac{1}{t}\sum_{\tau=1}^t y_i h_\tau(\x_i)$)
and large $s_{t,i}$ leads to zero weight,
the above phenomenon in fact implies that the examples' margins 
should be larger for {\ANB} than for NH-Boost.
This is confirmed by Figure \ref{fig:margins}, where we plot
the final cumulative margins on three datasets
(i.e. each point represents the fraction of examples with at most some fixed margin).
One can see that the lines for {\ANB} are below the ones for 
NH-Boost (and even AdaBoost) for most time,
meaning that {\ANB} achieves larger margins in general.
This could explain {\ANB}'s better test error on some datasets.

\begin{table}[h]
\caption{Description of datasets}
\label{tab:data}
\begin{center}
\begin{tabular}{|c|c|c|c|}
\hline
Data & {\#}feature & {\#}training & {\#}test \\
\hline
a9a & 123 & 32,561 & 16,281 \\
\hline
census & 131 & 1,000 & 1,000 \\
\hline
ocr49 & 403 & 1,000 & 1,000 \\
\hline
splice & 240 & 500 & 500 \\
\hline
w8a & 300 & 49,749	 & 14,951\\
\hline
\end{tabular}
\end{center}
\end{table}

\begin{table}[h]
\caption{Experiment results}
\label{tab:results}
\begin{center}
\begin{tabular}{|c|c|c|c||c|c||c|c|c||c|c|c|}
\hline
& \multicolumn{3}{|c||}{Time (s)} & \multicolumn{2}{|c||}{Zeros (\%)}  &
\multicolumn{3}{|c||}{Training Error (\%)} & \multicolumn{3}{|c|}{Test Error (\%)}  \\
\hline
Data & AB & NB & NBDT & NB & NBDT & AB & NB & NBDT & AB & NB & NBDT \\
\hline
a9a & 57.5 & 72.5 & \textbf{46.2} & 1.1 & \textbf{22.1} &
\textbf{15.4} & 15.8 & 15.5 & \textbf{15.0} & 15.6  & 15.2 \\
\hline
census & 1.7 & 2.2 & \textbf{1.4} & 2.2 & \textbf{19.2}  &
15.6 & 17.0 & \textbf{15.4} & 18.7 & 18.6 & \textbf{18.3} \\
\hline
ocr49 & 5.1 & 4.7 & \textbf{3.0} & 17.1 & \textbf{42.0} &
\textbf{1.7} & \textbf{1.7} & 2.4 & \textbf{5.5} & 5.9 & 5.8 \\
\hline
splice & 1.6 & 1.5 & \textbf{0.9} & 22.2 & \textbf{45.1} & 
\textbf{0.0} & \textbf{0.0} & 0.4 & 9.4 & 8.6 & \textbf{8.2} \\
\hline
w8a & 237.6 & 244.7 & \textbf{170.7} & 3.0 & \textbf{29.3} &
2.6 & \textbf{2.2} & 2.4 & 2.7 & \textbf{2.3} & 2.6 \\
\hline
\end{tabular}
\end{center}
\end{table}

\begin{figure}[h]
\centering
\begin{subfigure}[b]{0.32\textwidth}
\includegraphics[width=\textwidth]{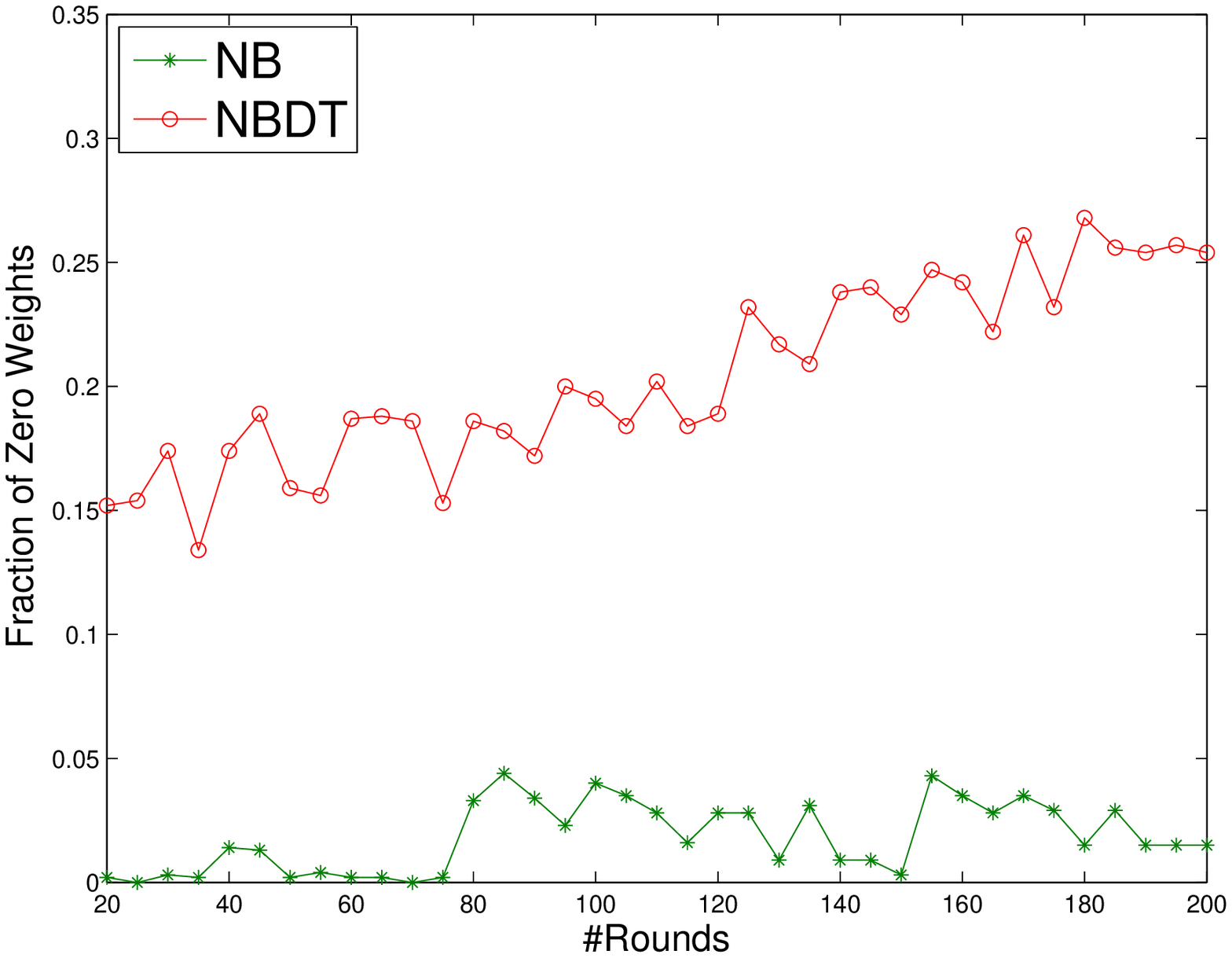}
\caption{census}
\end{subfigure}
\begin{subfigure}[b]{0.32\textwidth}
\includegraphics[width=\textwidth]{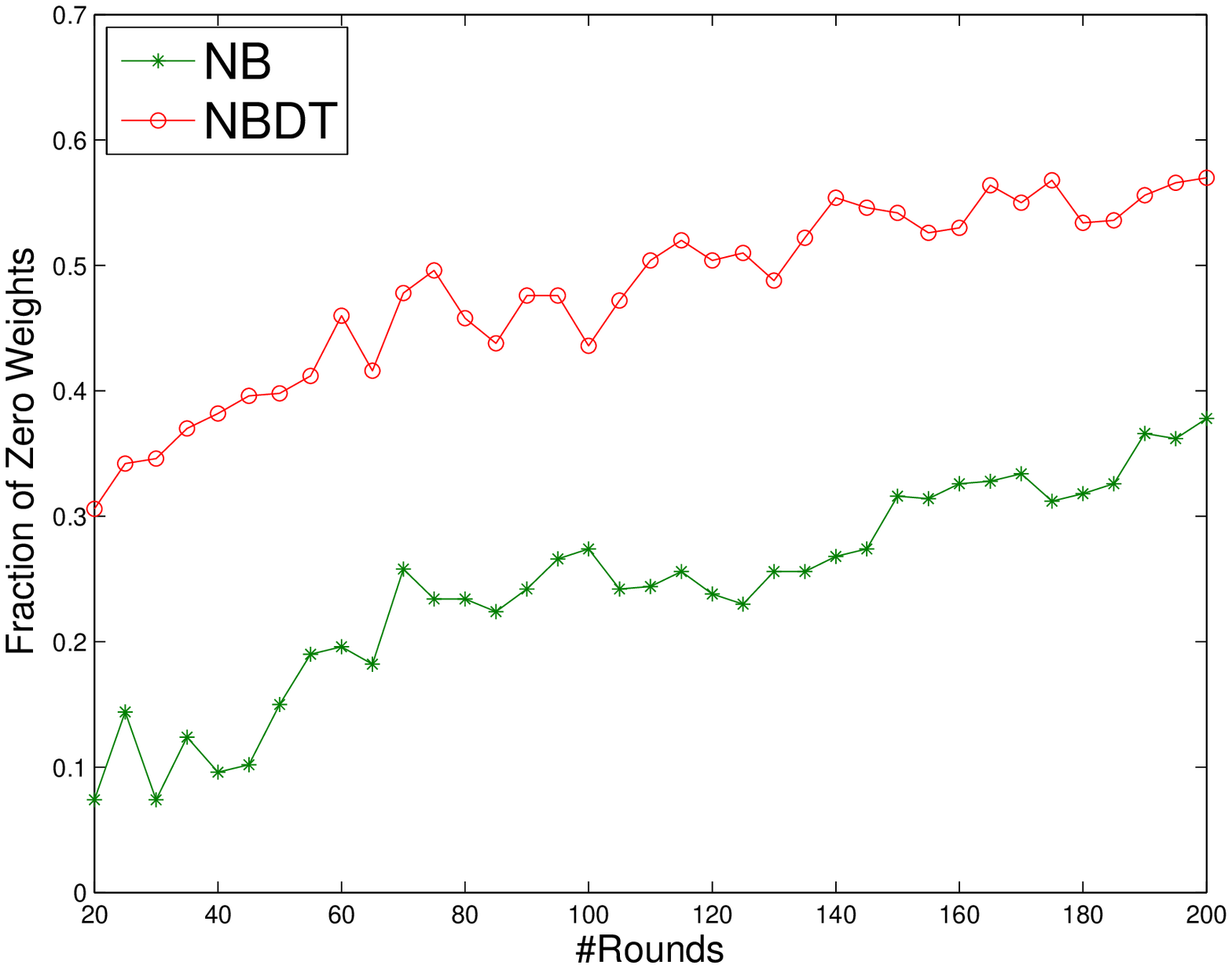}
\caption{splice}
\end{subfigure}
\begin{subfigure}[b]{0.32\textwidth}
\includegraphics[width=\textwidth]{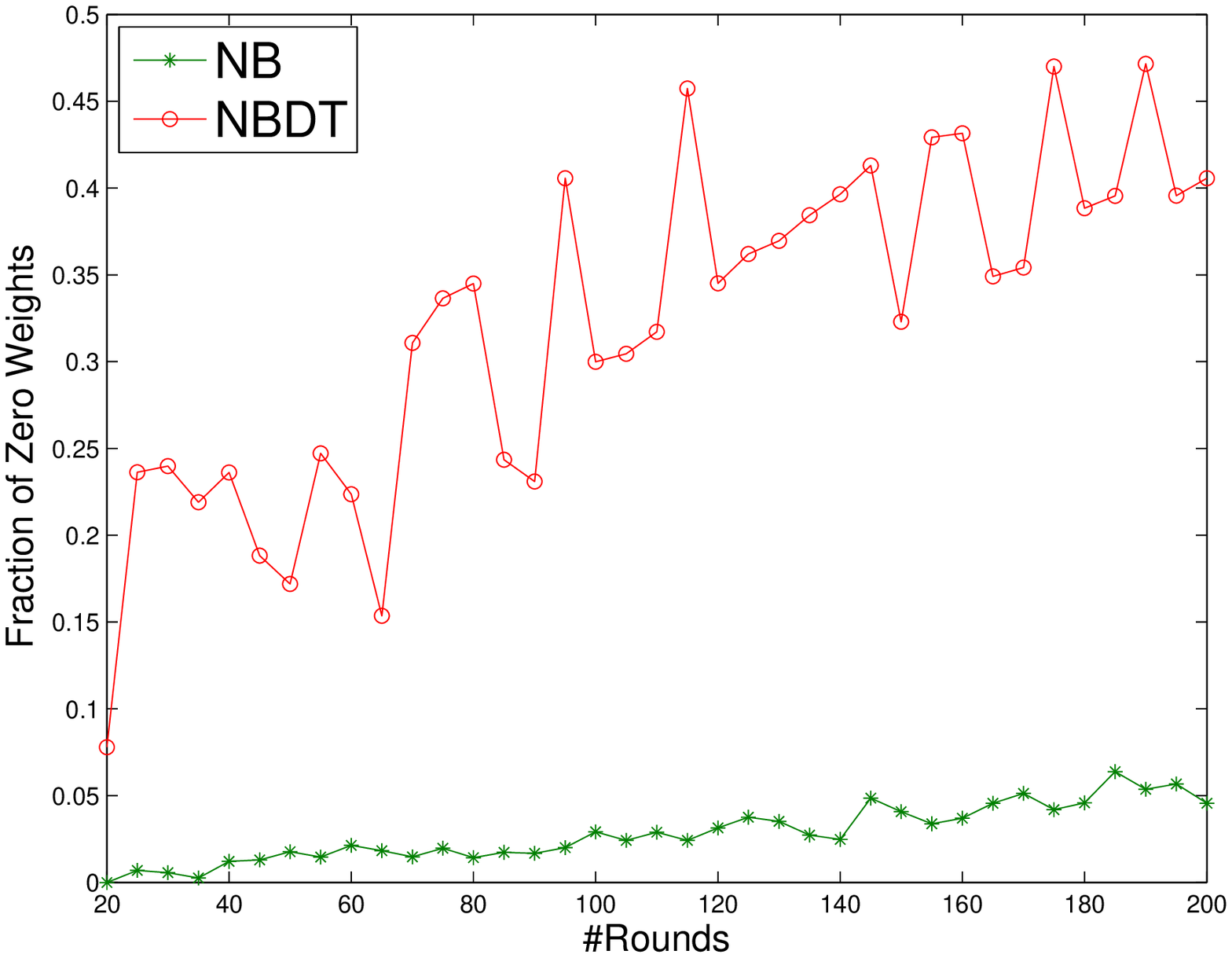}
\caption{w8a}
\end{subfigure}
\caption{Comparison of fraction of zero weights}\label{fig:zero}
\end{figure}

\begin{figure}[h!]
\centering
\begin{subfigure}[b]{0.32\textwidth}
\includegraphics[width=\textwidth]{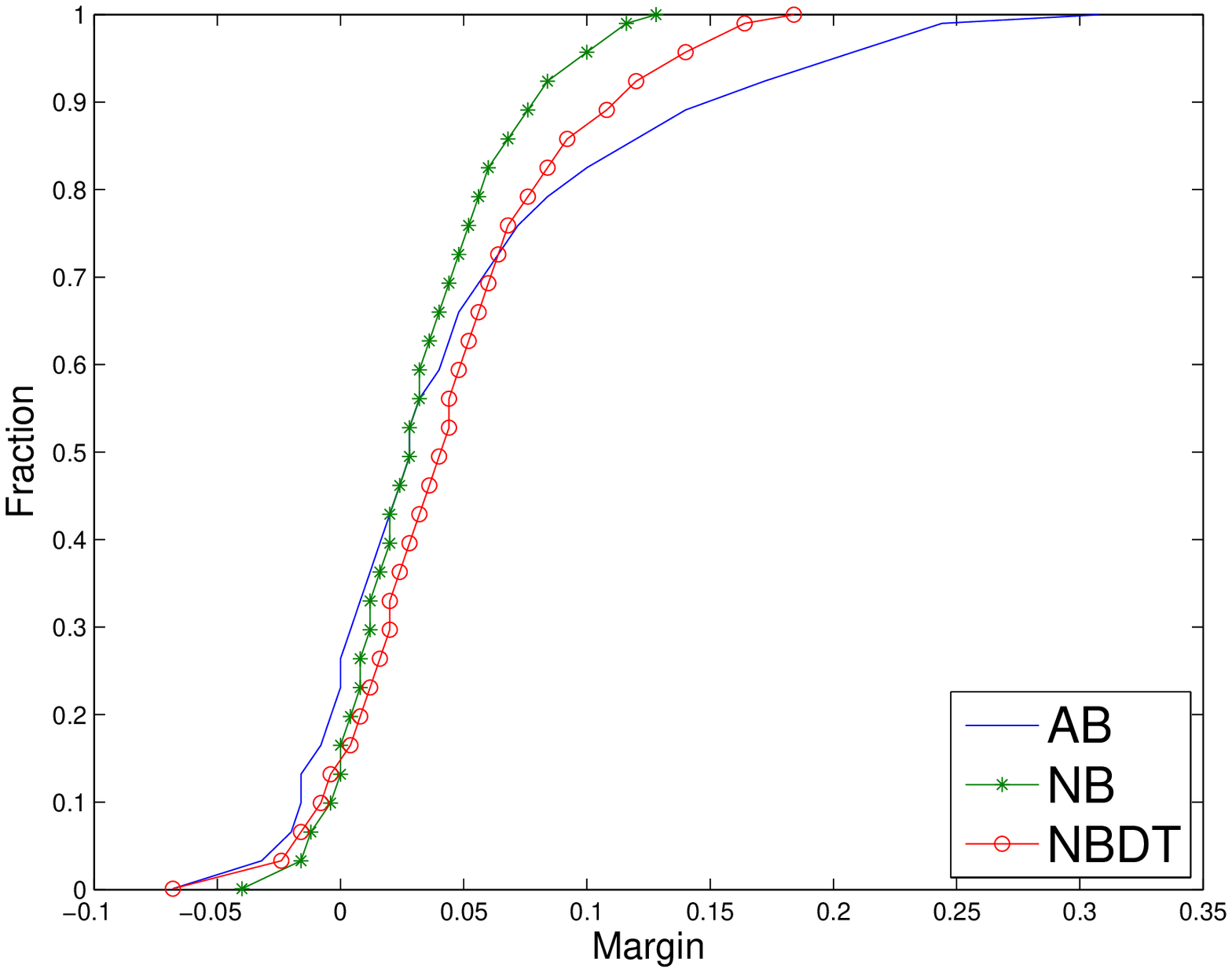}
\caption{census}
\end{subfigure}
\begin{subfigure}[b]{0.32\textwidth}
\includegraphics[width=\textwidth]{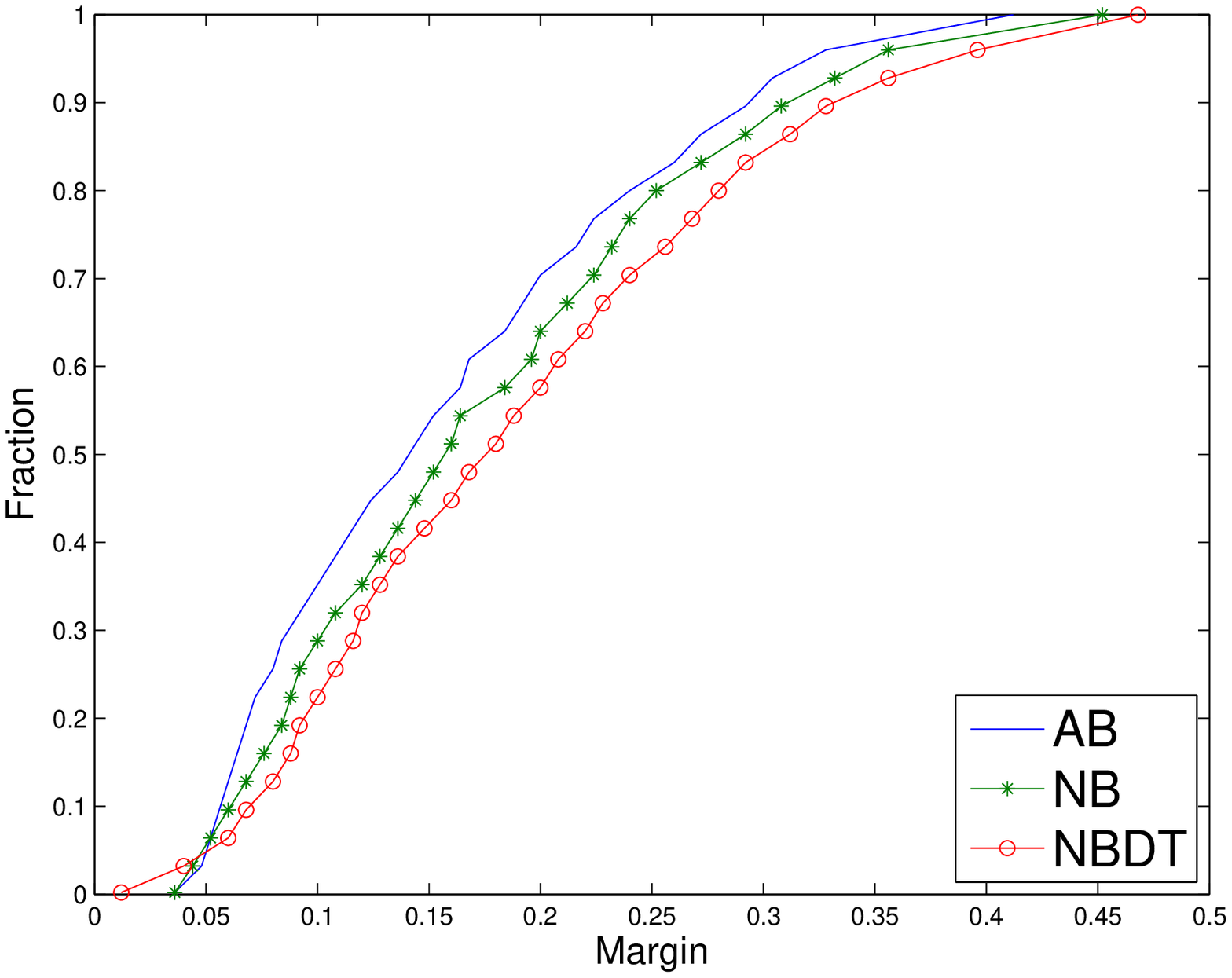}
\caption{splice}
\end{subfigure}
\begin{subfigure}[b]{0.32\textwidth}
\includegraphics[width=\textwidth]{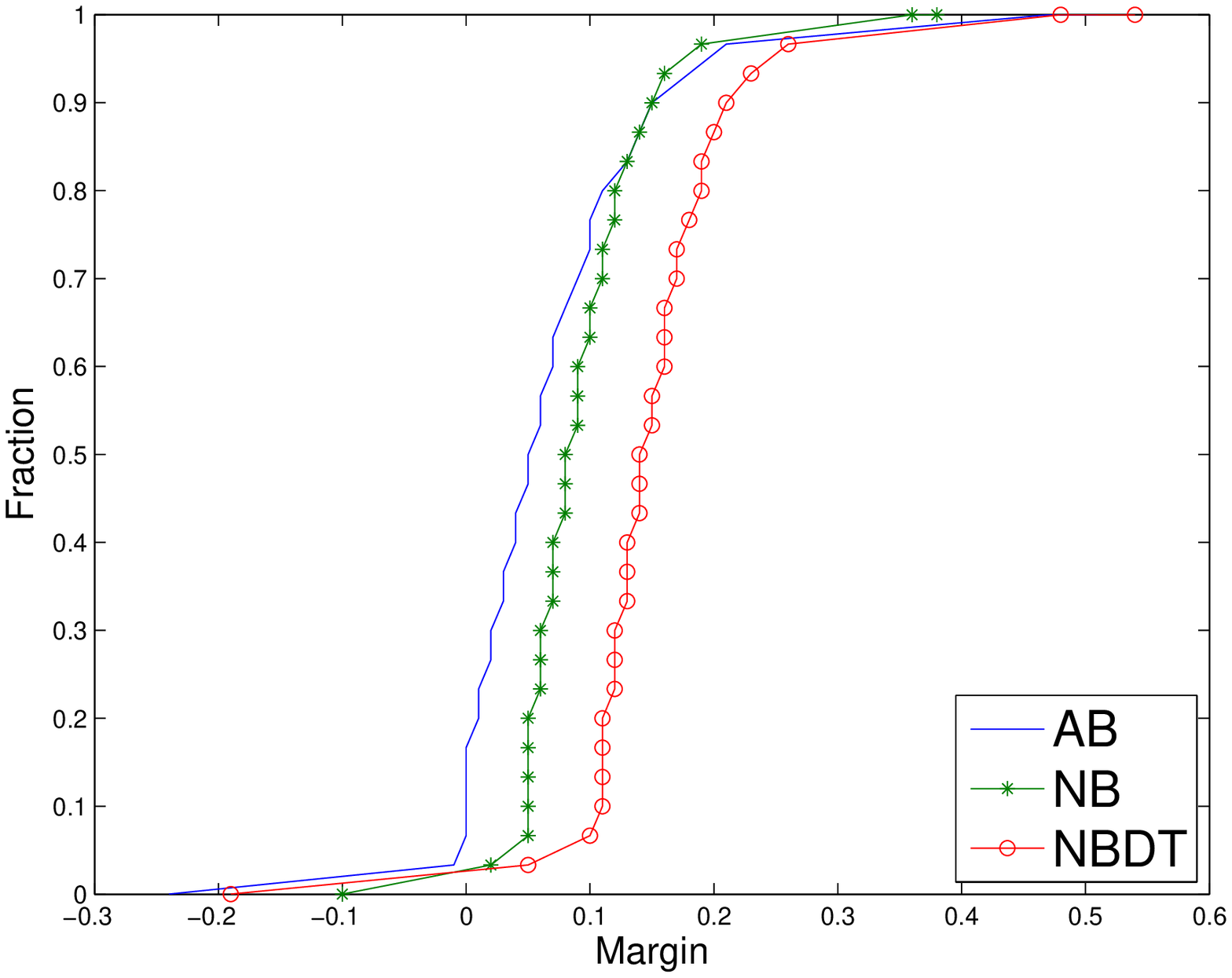}
\caption{w8a}
\end{subfigure}
\caption{Comparison of cumulative margins}\label{fig:margins}
\end{figure}

\end{document}